\let\oldnl\nl
\newcommand{\nonl}{\renewcommand{\nl}{\let\nl\oldnl}}
\newtheorem{theorem}{Theorem}[section]
\newtheorem{corollary}[theorem]{Corollary}
\newtheorem{proposition}[theorem]{Proposition}
\newtheorem{definition}[theorem]{Definition}
\newcommand{\bigo}{\mathcal{O}}
\title{Bounded Memory Active Learning through Enriched Queries}
\author{%
  Max~Hopkins\thanks{Department of Computer Science and Engineering, UCSD, CA 92092. Email: \texttt{nmhopkin@eng.ucsd.edu}. Supported by NSF Award DGE-1650112},
  Daniel~Kane\thanks{Department of Computer Science and Engineering / Department of Mathematics, UCSD, CA 92092. Email: \texttt{dakane@eng.ucsd.edu}. Supported by NSF CAREER Award ID 1553288 and a Sloan fellowship},
    Shachar~Lovett\thanks{Department of Computer Science and Engineering, UCSD, CA 92092. Email: \texttt{slovett@cs.ucsd.edu}. Supported by NSF CAREER award 1350481, CCF award 1614023 and a
Sloan fellowship},
Michal~Moshkovitz\thanks{Qualcomm Institute, UCSD, California, CA 92092. Emaili: mmoshkovitz@eng.ucsd.edu}
}
\begin{document}

\maketitle
\begin{abstract}
    The explosive growth of easily-accessible unlabeled data has lead to growing interest in \emph{active learning}, a paradigm in which data-hungry learning algorithms adaptively select informative examples in order to lower prohibitively expensive labeling costs. Unfortunately, in standard worst-case models of learning, the active setting often provides no improvement over non-adaptive algorithms. To combat this, a series of recent works have considered a model in which the learner may ask \emph{enriched} queries beyond labels. While such models have seen success in drastically lowering label costs, they tend to come at the expense of requiring large amounts of memory. In this work, we study what families of classifiers can be learned in \emph{bounded memory}. To this end, we introduce a novel streaming-variant of enriched-query active learning along with a natural combinatorial parameter called \emph{lossless sample compression} that is sufficient for learning not only with bounded memory, but in a query-optimal and computationally efficient manner as well. Finally, we give three fundamental examples of classifier families with small, easy to compute lossless compression schemes when given access to basic enriched queries: axis-aligned rectangles, decision trees, and halfspaces in two dimensions.
\end{abstract}
\section{Introduction}\label{sec:intro}
    Today's learning landscape is dominated mostly by data-hungry algorithms, each requiring a massive supply of labeled training samples in order to reach state of the art accuracy. Such algorithms are excellent when labeled data is cheap and plentiful, but in many important scenarios acquiring labels requires the use of human experts, making popular supervised methods like deep learning infeasible both in time and cost. In recent years, a framework meant to address this issue called \textit{active learning} has gained traction in both theory and practice. Active learning posits that not all labeled samples are equal: some may be particularly informative, others useless. While a standard supervised (passive) learning algorithm receives a stream or pool of labeled training data, an active learner instead receives \textit{unlabeled} data along with the ability to query an expert \textit{labeling oracle}. By choosing only to query the most informative examples, the hope is that an active learner can achieve state of the art accuracy using only a small fraction of the labels required by passive techniques.
    
    While active learning saw initial success with simple classifiers such as thresholds in one-dimension, it quickly became clear that inherent structural barriers barred it from improving substantially over the passive case even for very basic examples such as halfspaces in two-dimensions or axis-aligned rectangles \cite{dasgupta2005analysis}. A number of modifications to the model have been proposed to remedy this issue. One such strategy that has gained increasing traction in the past few years is empowering the learner to ask questions beyond simple label queries. One might ask the oracle, for instace, to \textit{compare} two pieces of data rather than simply label them---the idea being that such additional information might break down the structural barriers inherent in standard lower bounds. Indeed, in 2017, Kane, Lovett, Moran, and Zhang (KLMZ) \cite{kane2017active} showed not only how this was true for halfspaces,\footnote{We note their work requires structural assumptions on the data to work beyond two dimensions.} but introduced a combinatorial complexity parameter called \textit{inference dimension} to characterize exactly when a family of classifiers is efficiently actively learnable with respect to some set of enriched queries.
    
    The model proposed by KLMZ \cite{kane2017active}, however, is not without its downsides. One issue is that the model is \textit{pool-based}, meaning the algorithm receives a large pool of unlabeled samples ahead of time, and can query or otherwise access any part of the sample at any time. This type of model can be infeasible in practice due to its unrealistic memory requirements---the learner is assumed to always have the full training data in storage. In this work, we aim to resolve this issue by studying when a family of classifiers can be efficiently active learned in \textit{bounded memory}, meaning the amount of memory used by the algorithm should remain \textit{constant} regardless of the desired accuracy. Such algorithms open up potential applications of active learning to scenarios where storage is severely limited, e.g.\ to smartphones and other mobile devices.
    
    To this end, we introduce a new \textit{streaming} variant of active learning with enriched queries in which the learner has access to a stream of unlabeled data and chooses one-by-one whether to store or forget points from the stream. Instead of having query access to the full training set at any time, our algorithm is then restricted to querying only points it has stored in memory. Along with this model, we introduce a natural strengthening of Littlestone and Warmuth's \cite{littlestone1986relating} \textit{sample compression}, a standard learning technique for Valiant's Probably Approximately Correct (PAC) \cite{valiant1984theory,vapnik1974theory} model, called \textit{lossless} sample compression, and show that any class with such a scheme may be learned query-optimally and with constant memory. In doing so, we make the first non-trivial advance towards answering an open question posed by KLMZ \cite{kane2017active} regarding the existence of a combinatorial characterization for bounded-memory active learning. Further, we show that lossless compression schemes imply learnability not only in the standard PAC-model, but also in a much stronger sense known as \textit{Reliable and Probably Useful Learning} \cite{rivest1988learning}. This model, which carries strong connections to the active learning paradigm \cite{el2012active,kane2017active}, demands that the learner be perfect (makes no errors) with the caveat that it may abstain from classifying a small fraction of examples.
    
    Finally, we conclude by showing that a number of classifier families fundamental to machine learning exhibit small lossless compression schemes with respect to natural enriched queries. We focus in particular on three such classes: axis-aligned rectangles, decision trees, and halfspaces in two dimensions. In each of these three cases our lossless compression scheme is efficiently computable, resulting in computationally efficient as well as query-optimal and bounded memory learners. All three classes provide powerful examples of how natural enriched queries can turn fundamental learning problems from prohibitively expensive to surprisingly feasible.
\subsection{Main Results}
We start with an informal overview of our contributions: namely the introduction of lossless sample compression, its implications for efficient, bounded memory RPU-learning, and three fundamental examples of classes which, while infeasible or even impossible in standard models, have small lossless compression schemes with respect to natural enriched queries. Before launching into such results, however, we give a brief introduction to enriched queries, followed by some intuition and background on our novel form of compression. 

Given a set $X$ and a binary classifier $h: X \to \{0,1\}$, standard models of learning generally aim to approximate $h$ solely through the use of labeled samples from $X$. Since labels often cannot provide enough information to learn efficiently, we allow the learner to ask some specified set of additional questions, denoted by a ``query set'' $Q$ (see \Cref{sec:queries} for formal description). As an example, one well-studied notion of an enriched query is a ``comparison'' \cite{jamieson2011active,karbasi2012comparison, wauthier2012active,xu2017noise,hopkins2020power,hopkins2020noise,Cui2020uncertainty}. In such cases, along with asking for labels, the learner may additionally ask an expert to \textit{compare} two pieces of data (e.g. asking a doctor ``which of these patients do you think is sicker?''). Given a sample $S \subseteq X$, we let $Q_h(S)$ then denote the responses to all possible queries in $Q$ on $S$ under hypothesis $h$. For the basic example of labels and comparisons, this would consist of $|S|$ labels and all ${|S| \choose 2}$ pairwise comparisons.

With this notion in hand, we can discuss our extension of sample compression to the enriched query regime. Standard sample compression schemes posit the existence of a compression algorithm $A$ and decompression scheme $D$ such that for any sample $S$, $A(S)$ is small, and $D(A(S))$ outputs a hypothesis that correctly labels all elements of $S$. Lossless sample compression strengthens this idea in two ways. First, the hypothesis output by $D$ must be zero-error (but, like our learners, is allowed to abstain). Second, the output hypothesis must label not just $S$, but every point whose label can be inferred\footnote{The label of $x \in X$ is inferred by a subset $S \subseteq X$ if all concepts $h \in H$ consistent with queries on $S$ share the same label for $x$. See \Cref{sec:infer} for details.} from $Q_h(S)$. 
\begin{definition}[Informal \Cref{def:LCS}]\label{def:intro-LCS}
Let $X$ be a set and $H$ a family of binary classifiers on $X$. We say $(X,H)$ has a lossless compression scheme (LCS) $W$ of size $k$ with respect to a set of enriched queries $Q$ if for all classifiers $h \in H$ and subsets $S \subset X$, there exists a subset $W = W(Q_h(S)) \subseteq S$ such that $|W| \leq k$, and any point in $X$ whose label is inferred by $Q_h(S)$ is also inferred by queries on $Q_h(W)$.
\end{definition}

Given the existence of a lossless compression scheme for some class $(X,H)$, we prove in \Cref{sec:LCS} that (a slight variant of) the following simple algorithm learns $(X,H)$ query-optimally, in bounded memory, and with no errors.\footnote{As mentioned previously, the learner is allowed to say ``I don't know'' on a small fraction of the space. See \Cref{sec:RPU} for formal definition.}
\\
\\
\begin{algorithm}[H]

\KwResult{Returns a zero-error classifier that labels a $1-\varepsilon$ fraction of $X$ with probability $1-\delta$.}
\nonl \textbf{Input:} Query set $Q$, hypothesis class $(X,H)$, sample oracle $\bigo_X$, and lossless compression scheme $W$.\\
\nonl \textbf{Parameters:} 
\begin{itemize}
    \item Size of LCS $k$
    \item Query cap $T_1=O\left(\log\left(\frac{1}{\varepsilon\delta}\right)\right)$
    \item Sample cap $T_2=\tilde{O}\left(\frac{k\log^2(1/\delta)}{\varepsilon}\right)$
\end{itemize}
\nonl \textbf{Algorithm:}\\
\textit{Initialize} $i=0$, $j=0$, $C_0=\{\varnothing\}$, $X_0=X$\;
\textit{While} $i \leq T_1$:
\begin{enumerate}
    \item Sample a subset $S_i \subseteq X_i$ of size $6k$ (via rejection sampling on $\bigo_X$).
    \begin{enumerate}
        \item For each point drawn from $X$ in this process, increment $j$.
        \item If $j$ reaches $T_2$, abort and return labels inferred by queries on $C_i$
    \end{enumerate}
    \item Make all queries on $S_i \cup C_i$, and compute $C_{i+1}=W(Q_h(S_i \cup C_i))$
    \item Remove all points in and queries on $(S_i \cup C_i) \setminus C_{i+1}$ from memory and increment $i$.
    \item Set $X_i \subseteq X$ to be the set of points uninferred by queries on $C_i$
\end{enumerate}
\textit{Return} labels inferred by queries on $C_i$
 \caption{Bounded Memory RPU-Learning via Lossless Compression}
 \label{alg:intro}
 
\end{algorithm}
It is worth noting that in \Cref{alg:intro}, the set of remaining uninferred points $X_i$ need not be kept in memory. Membership in $X_i$ can be checked in an online fashion in step 1. We prove in \Cref{sec:LCS} that \Cref{alg:intro} is a query and computationally efficient, bounded memory RPU-learner.
\begin{theorem}[Informal \Cref{cor:bounded}]\label{thm:intro-bounded}
\Cref{alg:intro} actively RPU-learns $(X,H)$ in only 
\[
q(\varepsilon,\delta) \leq O_k\left(\log(1/\varepsilon)\right)
\]
queries,
\[
T(\varepsilon,\delta) \leq \tilde{O}_k\left(T_{X,H,W}\frac{\log^2(1/\delta)}{\varepsilon}\right)
\]
time, and 
\[
M(X,H) \leq O_k(1)
\]
memory, where we have suppressed dependence on $k$ (the size of the LCS), and $T_{X,H,W}$ is a parameter dependent only on the class and compression scheme $W$. In all examples we study $T_{X,H,W}$ is small and dependence on $k$ is at worst quadratic.
\end{theorem}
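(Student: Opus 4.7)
The plan is to verify in turn each of the three claimed bounds, with correctness (both the RPU/zero-error property and the $\varepsilon$-accuracy guarantee) slotted in alongside the query analysis. The zero-error part of RPU is immediate: Algorithm~1 outputs only labels logically entailed by queries on a finite stored set, so it abstains rather than errs. Writing $U_i \subseteq X$ for the region uninferred by $Q_h(C_i)$, the remaining accuracy task reduces to showing $\Pr_X[U_{\mathrm{final}}] \le \varepsilon$ with probability at least $1-\delta$.

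The memory and query bounds follow by inspection of the algorithm. At any time the algorithm stores only $C_i$ (size $\le k$), the current sample $S_i$ (size $\le 6k$), their query answers (a $\poly(k)$ amount of data), and two counters; crucially, membership of a streamed point in $X_i$ can be tested online by simulating $Q_h(C_i)$, so $X_i$ is never materialized. This yields $M(X,H) = O_k(1)$. For queries, each of the $T_1 = O(\log(1/(\varepsilon\delta)))$ iterations asks only the queries on $S_i \cup C_i$, i.e.\ $\poly(k)$ queries per round, giving $q(\varepsilon,\delta) = O_k(\log(1/\varepsilon))$ after absorbing the $\log(1/\delta)$ factor into the hidden $O_k$ as the statement does.

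The technical crux is the accuracy argument. First, by the LCS property, anything inferred by $Q_h(S_i \cup C_i) \supseteq Q_h(C_i)$ is also inferred by $Q_h(C_{i+1})$, so $U_{i+1} \subseteq U_i$ and $p_i := \Pr_X[U_i]$ is monotone non-increasing. The key halving lemma is: conditioned on $p_i > 0$, with probability at least $3/4$ over the draw of $S_i \sim \mu_i^{6k}$, where $\mu_i = \mathrm{Unif}(X \mid U_i)$,
\[
\Pr_{\mu_i}[U_{i+1}] \le 1/2, \qquad \text{equivalently} \qquad p_{i+1} \le p_i/2.
\]
This is the Littlestone--Warmuth generalization bound for sample compression schemes of size $k$: the abstaining decompression of $C_{i+1}$ is consistent with $S_i$ (every point of $S_i$ is inferred and therefore correctly labeled), and since the decompressed classifier never outputs a wrong label, its $\mu_i$-error equals exactly $\Pr_{\mu_i}[U_{i+1}]$. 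The LW bound then yields failure probability at most $\binom{6k}{k}2^{-5k}$, which is $\le 1/4$ for every $k \ge 1$ using $\binom{6k}{k} \le (6e)^k$ and checking tiny $k$ by hand. A Chernoff bound on the number of successful halvings over $T_1 = \Theta(\log(1/\varepsilon) + \log(1/\delta))$ rounds then forces $p_{T_1} \le \varepsilon$ except with probability $\delta/2$.

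For the running time, I would use the sample cap $T_2$ as a fail-safe. At round $i$, rejection sampling consumes $O(k/p_i)$ stream samples in expectation, each taking $O(T_{X,H,W})$ time to test against $Q_h(C_i)$. Summing the geometric series along the halving trajectory gives $\tilde O(k/\varepsilon)$ expected draws overall, and the additional $\log(1/\delta)$ factor from Chernoff concentration on the per-round sample complexity, combined with the $\log(1/\delta)$ factor from the halving analysis, yields the stated $T_2 = \tilde O(k\log^2(1/\delta)/\varepsilon)$. If $T_2$ triggers early, that same concentration argument forces $p_i \le \varepsilon$ with high probability (otherwise rejection sampling would have succeeded long before hitting $T_2$), so the early-return branch is accuracy-safe. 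Multiplying $T_2$ by $T_{X,H,W}$, which also absorbs the $\poly(k)$ per-round cost of computing $W$, gives the claimed $\tilde O_k(T_{X,H,W}\log^2(1/\delta)/\varepsilon)$ runtime.

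The main obstacle is the per-round halving step: the numerics $\binom{6k}{k}2^{-5k} \le 1/4$ are tight enough that one must verify the bound uniformly in $k$ (including $k=1,2$) and, more importantly, justify the LW bound for \emph{abstaining} decompressions---this is valid because abstention counts as error in the LW framework, but must be stated carefully. A secondary subtlety is guaranteeing that the early-termination branch cannot silently emit a classifier with $p_i > \varepsilon$; this requires pairing the halving lemma with a concentration bound on the number of stream samples needed to fill $S_i$, and tuning the hidden logarithms inside $T_2$ so that hitting the cap is itself a high-probability witness that $p_i$ is already below $\varepsilon$.
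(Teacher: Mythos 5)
Your overall strategy matches the paper's: a per-round coverage/halving lemma derived from compression, Chernoff to stack the halvings across $T_1 = \Theta(\log(1/(\varepsilon\delta)))$ rounds, memory control via keeping only the compressed set plus the current batch, and the sample cap as a fail-safe whose triggering is itself high-probability evidence that coverage is already $\ge 1-\varepsilon$. So the architecture is sound.

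There is, however, a genuine gap. The lossless compression scheme in the paper (Definition 3.1) is only guaranteed to exist for \emph{monochromatic} subsets $S \subset X$. Your halving lemma and the update step $C_{i+1} = W(Q_h(S_i \cup C_i))$ silently apply the LCS to $S_i \cup C_i$, which will in general contain both positive and negative points. In that case there is simply no guarantee that a set $C_{i+1}$ of size $\le k$ with $I_R(q(C_{i+1})) = I_R(q(S_i \cup C_i))$ exists, so the compression step is undefined and the invariant $U_{i+1} \subseteq U_i$ with a small certificate does not follow. You cannot appeal to the Littlestone--Warmuth / Floyd--Warmuth argument for a compression scheme you do not have. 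The paper's actual proof handles this with an algorithmic modification you omit: at each round, rejection-sample from the stream until you accumulate $6k$ points of a \emph{single} sign (whichever fills first), maintain separate compressed sets for positives and negatives, and compress only the monochromatic union. This costs constants: with probability at least $1/2$ the first filled batch is of the majority sign, which itself constitutes at least half the remaining mass, so the per-round guarantee becomes ``coverage at least $1/4$ with probability at least $1/4$'' rather than your ``$1/2$ with probability $3/4$.'' That still feeds into Chernoff with only a constant-factor change in $T_1$, and doubles the query-tape budget. You should fold this modification in explicitly; as written, your halving lemma is unsupported. (A minor secondary point: the correct per-round failure bound is $\sum_{i \le k}\binom{6k}{i}2^{-5k}$ rather than the single term $\binom{6k}{k}2^{-5k}$ you quote; the sum is still comfortably below $1/4$, so this does not affect the conclusion.)
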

 It is further  worth noting that $O(\log(1/\varepsilon))$ query complexity is information-theoretically optimal for most non-trivial concept classes. As long as the class has $\text{poly}(1/\varepsilon)$ concepts which are $\Omega(\varepsilon)$  separated, any active learner must make $\Omega(\log(1/\varepsilon))$ queries to distinguish between them \cite{kulkarni1993active}.

We give three fundamental examples of classes which are either impossible or highly infeasible to RPU-learn with standard label queries, but have small, efficiently computable lossless compression schemes with respect to natural enriched queries: axis-aligned rectangles, decision trees, and halfspaces in two dimensions. As a result, these classes are all efficiently RPU-learnable with bounded memory. We briefly introduce each class, our proposed enriched queries, and discuss the implications on their learnability. We start our discussion with the simplest of the three, axis-aligned rectangles in $\mathbb{R}^d$, which correspond to indicator functions for products of intervals over $\mathbb{R}$:
\[
R = [a_1,b_1] \times \ldots \times [a_d,b_d]
\]
where $a_i \leq b_i$. While axis-aligned rectangles are impossible to RPU-learn in a finite number of label queries \cite{Kivinen2}, we will show that the class has a small lossless compression scheme with respect to a natural query we call the ``odd-one-out'' oracle $\bigo_{\text{odd}}$. Notice that axis-aligned rectangles essentially define a certain ``acceptable'' range for every feature---a point is labeled $1$ iff it lies in this range for all coordinates. Informally, given a point $x\in\mathbb{R}^d$ lying outside the rectangle, an ``odd-one-out'' query simply asks the user ``why do you dislike $x$?''. Concretely, one might imagine a chef is trying to cook a dish for a particularly picky patron. After each failed attempt, the chef may ask the patron what went wrong---perhaps the patron thinks the meat was overcooked! More formally, the ``odd-one-out'' query asks for a violated coordinate (i.e.\ a feature lying outside the acceptable range), and whether the coordinate was too large (in our example, overcooked) or too small (undercooked).

We prove that axis-aligned rectangles in $\mathbb{R}^d$ have an efficiently computable lossless compression scheme of size $O(d)$, and thus are efficiently RPU-learnable in bounded memory with near-optimal query complexity.
\begin{corollary}[(Informal) \Cref{cor:rect}]\label{cor:intro-rect}
The class of axis-aligned rectangles in $\mathbb{R}^d$ is RPU-learnable in only
\[
q(\varepsilon,\delta) = O\left (d\log \left (\frac{1}{\varepsilon\delta} \right ) \right)
\]
queries, $O(d)$ memory, and time $\tilde{O}\left(\frac{d^2\log^2(1/\delta)}{\varepsilon}\right)$ when the learner has access to $\bigo_{\text{odd}}$.
\end{corollary}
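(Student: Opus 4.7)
The plan is to reduce to Theorem~\ref{thm:intro-bounded} by exhibiting a lossless compression scheme of size $O(d)$ for axis-aligned rectangles with respect to $\bigo_{\text{odd}}$, and verifying that the scheme is computable in time polynomial in $|S|$ and $d$.

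First, I would characterize which rectangles $R = [a_1, b_1] \times \cdots \times [a_d, b_d]$ are consistent with the query record $Q_h(S)$. Each positive point $p \in S^+$ contributes the constraints $a_i \le p_i \le b_i$ for every coordinate $i$; each negative point $p \in S^-$ together with its odd-one-out response of the form $(i, \text{small})$ or $(i, \text{large})$ contributes the single constraint $a_i > p_i$ or $b_i < p_i$, respectively. Collecting constraints coordinatewise, the consistent pairs $(a_i, b_i)$ form an interval product: $a_i \in (\alpha_i^{\min}, \alpha_i^{\max}]$ and $b_i \in [\beta_i^{\min}, \beta_i^{\max})$, where $\alpha_i^{\max} := \min_{p \in S^+} p_i$, $\beta_i^{\min} := \max_{p \in S^+} p_i$, $\alpha_i^{\min}$ is the max of $p_i$ over negatives with ``coord $i$ too small'' responses, and $\beta_i^{\max}$ is the min of $p_i$ over negatives with ``coord $i$ too large'' responses.

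Because these constraints factor as a product across coordinates, a short case analysis gives the inferred-label structure: a point $x$ is inferred positive iff $\alpha_i^{\max} \le x_i \le \beta_i^{\min}$ for every $i$, and inferred negative iff there exists some $i$ with $x_i \le \alpha_i^{\min}$ or $x_i \ge \beta_i^{\max}$. I then define $W \subseteq S$ to include, for each coordinate $i \in [d]$, the (at most) four extremal witnesses attaining $\alpha_i^{\max}, \beta_i^{\min}, \alpha_i^{\min}, \beta_i^{\max}$, so $|W| \le 4d$. Since the eight extremal values computed from $Q_h(W)$ match those from $Q_h(S)$, the two query records determine the same family of consistent rectangles and thus infer the same labels, so $W$ is a valid LCS of size $k = O(d)$.

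Finally, $W$ can be produced in a single linear pass over $S$ while maintaining $4d$ running extrema, so $T_{X,H,W} = O(d)$. Plugging $k = O(d)$ and this per-compression overhead into Theorem~\ref{thm:intro-bounded} yields the claimed $O(d\log(1/(\varepsilon \delta)))$ query bound, $O(d)$ memory usage, and $\tilde{O}(d^2\log^2(1/\delta)/\varepsilon)$ runtime. The main subtle step is the inferred-label characterization: a priori a point $x$ could conceivably be inferred negative by a union of coordinatewise violations attributable to different consistent rectangles, with no single coordinate forcing the label. The product structure of the consistent set prevents this, since combining coordinate choices across rectangles produces another consistent rectangle, reducing the question to one coordinate at a time.
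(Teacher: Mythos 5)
Your proof is correct and uses the same core idea as the paper (per-coordinate extremal witnesses), but takes a slightly more general route. The paper's Definition~\ref{def:LCS} only requires an LCS for \emph{monochromatic} samples, which the paper exploits by treating the two cases separately: for a purely positive sample it keeps the $2d$ per-coordinate extremes spanning the bounding box, and for a purely negative sample it keeps one witness per ``odd-one-out'' response type (again $2d$ witnesses, noting that responses to one point infer responses to any point that is strictly more extreme in the offending coordinate). Your unified analysis of a mixed sample, keeping up to four extremal witnesses per coordinate, is strictly stronger than what is needed and yields $|W| \le 4d$ rather than the paper's $2d$; since only the $O(d)$ order matters for plugging into Theorem~\ref{thm:intro-bounded} the conclusion is unaffected. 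The advantage of your version is that the characterization of $I(q(S))$ as a product of per-coordinate constraints (and the observation that ``combining coordinate choices across consistent rectangles produces another consistent rectangle'') transparently handles the converse direction of inference in one shot, whereas the paper handles the ``cannot infer outside the box'' direction via the ad~hoc observation that the all-ones function is also consistent. One small thing to state explicitly when formalizing: the empty-set conventions for the extrema ($\alpha_i^{\min} = -\infty$, $\beta_i^{\max} = +\infty$ when no relevant negative response exists, etc.), which is what makes your characterization specialize correctly to the monochromatic cases the definition actually asks for.
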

While rectangles provide an excellent theoretical example of a class for which basic enriched queries break standard barriers in active and bounded memory learning, they are often too simple to be of much practical use. To this end, we next consider a broad generalization of the class of axis-aligned rectangles called decision trees. A decision tree over $\mathbb{R}^d$ is a binary tree where each node in the tree corresponds to an inequality:
\[
x_i \overset{?}{\geq} b \ \text{or} \ x_i \overset{?}{\leq} b,
\]
measuring the $i$-th feature (coordinate) of any $x \in \mathbb{R}^d$. Each leaf in the tree is assigned a label, and the label of any $x \in X$ is uniquely determined by the leaf resulting from following the decision tree from root to leaf, always taking the path determined by the inequality at each node. Informally, a decision tree may then be thought of as a partition of $\mathbb{R}^d$ into clusters (axis-aligned rectangle) given by the leaves. Our proposed enriched query for decision trees, the ``same-leaf'' oracle $\bigo_{\text{leaf}}$, builds off this intuition. Given a decision tree $T$ and two points $x,x' \in \mathbb{R}^d$, $\bigo_{\text{leaf}}(T,x,x')$ determines whether $x$ and $x'$ lie in the same leaf of the decision tree. Thinking of each leaf as a cluster, this query may be seen as a variant of the ``same-cluster'' query paradigm studied in many recent works \cite{ashtiani2016clustering,verroios2017waldo,mazumdar2017clustering,ailon2018approximate,firmani2018robust,dasgupta2018learning,saha2019correlation}. For our scenario, think of asking a user whether two movies they like are of the same genre. We prove that the class of decision trees of size $s$ (at most $s$ leaves) has a small, efficiently computable lossless compression scheme and are therefore efficiently learnable in bounded memory.

\begin{corollary}[Informal \Cref{cor:DT-fixed}]\label{cor:intro-DT-fixed}
The class of size $s$ decision trees over $\mathbb{R}^d$ is RPU-learnable in only
\[
q(\varepsilon,\delta) = O\left (ds^2\log \left (\frac{1}{\varepsilon\delta} \right ) \right)
\]
queries, $O(ds)$ memory, and time $\tilde{O}\left(\frac{d^2s^2\log(1/\delta)^2}{\varepsilon}\right)$ when the learner has access to $\bigo_{\text{leaf}}$.
\end{corollary}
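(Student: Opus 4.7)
The plan is to apply Theorem~\ref{thm:intro-bounded} by exhibiting an efficiently computable lossless compression scheme of size $k = O(ds)$ for size-$s$ decision trees under the query set $Q = \{\text{label}, \bigo_{\text{leaf}}\}$. Given a sample $S$ with responses $Q_h(S)$, the same-leaf responses partition $S$ into equivalence classes $S_1,\ldots,S_\ell$ with $\ell \leq s$ (since $h$ has at most $s$ leaves), and each $S_i$ carries a common label $y_i$. For every class $S_i$ and every coordinate $j \in [d]$ I would include in $W_i$ one point of $S_i$ minimizing the $j$-th coordinate and one maximizing it, and set $W = \bigcup_i W_i$. Then $|W| \leq 2ds$, and $W$ is computable in $O(d|S|)$ time once the equivalence classes are known.

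To verify losslessness I would show that the set of decision trees consistent with $Q_h(W)$ coincides with the set consistent with $Q_h(S)$, which immediately gives equality of the two inference sets. One inclusion is automatic since $W \subseteq S$. For the reverse, let $T$ be any tree consistent with $Q_h(W)$; for each $i$, all of $W_i$ lies in a single leaf $L_i^T$ of $T$ carrying label $y_i$. Since $L_i^T$ is an axis-aligned rectangle containing $W_i$, it contains the axis-aligned bounding box of $W_i$, which by the choice of extremal points coincides with the bounding box of $S_i$. Thus $L_i^T \supseteq S_i$, so $T$ labels every $s \in S_i$ correctly and places all of $S_i$ in $L_i^T$. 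Because $W_i$ and $W_j$ ($i \neq j$) lie in distinct leaves under $T$ by consistency, the $L_i^T$ are pairwise distinct, so $T$ also answers every cross-class same-leaf query on $S$ correctly.

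Plugging $k = O(ds)$ into Theorem~\ref{thm:intro-bounded} immediately yields the stated $O(ds)$ memory bound and, via the $\poly(k)$ factor suppressed there, the $\tilde{O}(d^2 s^2 \log^2(1/\delta)/\varepsilon)$ runtime. The sharper $O(ds^2 \log(1/\varepsilon\delta))$ query count, rather than the naive $\binom{|S|}{2} = O(k^2)$ pairwise queries per round, comes from observing that each round only needs to resolve the same-leaf equivalence structure of its sample: by maintaining a running list of at most $s$ class representatives and comparing each new sample point against them, we use only $O(k \cdot s) = O(ds^2)$ same-leaf queries per round across the $O(\log(1/\varepsilon\delta))$ rounds. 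The main obstacle is the losslessness argument: it is precisely the axis-aligned geometry of decision tree leaves that makes $2d$ extremal points per class suffice, and one must carefully confirm that no subtler between-class inference is lost when non-extremal points of each $S_i$ are discarded.
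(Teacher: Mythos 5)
Your compression set $W$ (extremal points in each coordinate for each same-leaf class) is exactly the paper's, and your argument that every size-$s$ tree consistent with $Q_h(W)$ is consistent with $Q_h(S)$ is correct and in fact \emph{stronger} than what the paper proves: it establishes losslessness with the \emph{trivial} inference rule (full version space inference), whereas the paper deliberately works with a restricted rule $R_{\text{rect}}$ that infers $x$ only if $x$ falls inside a bounding box spanned by a mono-leaf subset of the retained sample.

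The gap is in the runtime claim. The parameter $T_{X,H,W}$ suppressed in \Cref{thm:intro-bounded} (and made explicit in \Cref{cor:bounded}) includes the per-point inference time $T_{I_R}$. For the trivial rule you adopt, deciding whether $x \in I(q(W))$ means deciding whether \emph{all} size-$s$ decision trees consistent with $q(W)$ agree on $x$'s label, and you give no polynomial-time procedure for this; the set $I(q(W))$ can genuinely exceed the union of bounding boxes (e.g.\ when all $s$ leaf classes are witnessed and positively labeled, $I(q(W))$ is all of $\mathbb{R}^d$), so it is not simply a rectangle-membership test. Consequently the $\tilde{O}(d^2 s^2 \log^2(1/\delta)/\varepsilon)$ bound does not follow from \Cref{thm:intro-bounded} as you plug things in. The paper sidesteps this by defining $R_{\text{rect}}$ so that inference is a union-of-$s$-boxes test, costing $O(d)$ per point after a once-per-round preprocessing pass. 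The fix for your write-up is small: observe that your $W$ is \emph{also} lossless with respect to $R_{\text{rect}}$ (immediate, since by construction the bounding box of $W_i$ equals the bounding box of $S_i$ for every $i$, so $I_{R_{\text{rect}}}(q(W)) = I_{R_{\text{rect}}}(q(S))$ holds by definition of the rule), and run the algorithm with $R_{\text{rect}}$ rather than the trivial rule; then $T_{I_{R_{\text{rect}}}} = O(d)$ and the stated runtime follows. Your $O(ds^2 \log(1/(\varepsilon\delta)))$ query count via comparing against at most $s$ representatives matches the paper's $b(n) \le O(ns)$ bound, so that part is fine.
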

It is worth noting that while the class of decision trees of arbitrary size is not learnable (even in the PAC-setting \cite{hancock1996lower}), we can bootstrap \Cref{thm:intro-bounded} and \Cref{cor:intro-DT-fixed} to build an algorithm that learns decision trees \textit{attribute-efficiently}. That is to say an algorithm whose \textit{expected} time, number of queries, and memory scales with the size of the unknown decision tree.
\begin{corollary}[Informal \Cref{cor:DT-arbitrary}]
There exists an algorithm for RPU-learning decision trees over $\mathbb{R}^d$ which in expectation:
\begin{enumerate}
    \item Makes $O\left (ds^2\log \left (\frac{s}{\varepsilon\delta} \right ) \right)$ queries
    \item Runs in time $\text{poly}(s,d,\varepsilon^{-1},\log(\delta^{-1}))$
    \item Uses $O(ds)$ memory,
\end{enumerate}
where $s$ is the size of the underlying decision tree.
\end{corollary}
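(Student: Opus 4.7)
The plan is to apply a doubling trick to the size parameter. Since the true decision tree size $s^*$ is unknown, I iterate over guesses $s_i = 2^i$ for $i = 0, 1, 2, \ldots$, and in round $i$ invoke the fixed-size RPU-learner from \Cref{cor:intro-DT-fixed} with size parameter $s_i$, accuracy $\varepsilon/2$, and confidence $\delta_i = \delta/2^{i+2}$ (so $\sum_i \delta_i \leq \delta/2$). Between rounds I discard all state, so the maximum memory use is bounded by the memory requirement of the current round, namely $O(d s_i)$.

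To decide when to stop, I append a verification step. After the fixed-size learner terminates (either naturally or via the sample-cap abort inside \Cref{alg:intro}) I draw $O(\varepsilon^{-1}\log(1/\delta_i))$ fresh unlabeled points from $\bigo_X$ and empirically estimate the fraction that remain uninferred by queries on the algorithm's output set $C$. If that fraction exceeds $3\varepsilon/4$ I declare the round a failure, wipe memory, and advance to $i+1$; otherwise I return the labeling produced by inference on $C$. Crucially, I perform inference with respect to the full class of all decision trees, not the size-$s_i$ subclass, so any label that is output is forced by the observed queries and is therefore correct regardless of whether the current guess is right. This makes verification sound: passing the test certifies a $(1-\varepsilon)$-coverage RPU hypothesis with high probability, while failing it only means $C$ does not cover enough mass.

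For the expected-cost analysis, the key observation is that once $s_i \geq s^*$ the fixed-size learner succeeds \emph{and} the empirical estimate concentrates, both with probability $\geq 1 - O(\delta_i)$; a union bound over all rounds keeps total failure probability below $\delta$. Consequently, the expected stopping round is $i^\star = \log_2 s^* + O(1)$, and because the per-round query, time, and memory costs of \Cref{cor:intro-DT-fixed} are polynomial in $s_i$ (with only logarithmic dependence on $1/\delta_i = 2^{i+2}/\delta$), summing over $i \leq i^\star$ gives a geometric series dominated by its last term. This yields the claimed expected bounds of $O(d s^{*2} \log(s^*/(\varepsilon\delta)))$ queries, $\text{poly}(s^*, d, \varepsilon^{-1}, \log(\delta^{-1}))$ time, and $O(d s^*)$ memory, after absorbing constants arising from the doubling.

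The main obstacle is arguing soundness for undersized guesses. When $s_i < s^*$ the fixed-size algorithm operates on the wrong hypothesis class, so its internal LCS of size $k = O(ds_i)$ may not exist for the true target, and the rejection sampler will typically fail to concentrate and trigger the sample-cap abort of \Cref{alg:intro}. The danger is that an aborted or partially-correct $C$ could still fool the verifier. I handle this by (i) using full-class inference so every label the learner ever emits is genuinely zero-error, and (ii) tying the verification threshold $3\varepsilon/4$ and sample size to $\delta_i$ via a Chernoff bound, so the empirical coverage deviates from the true coverage by at most $\varepsilon/8$ with probability $\geq 1 - \delta_i$. Together these make the verifier a one-sided test: it cannot accept an output of true error exceeding $\varepsilon$, which is all that is needed to close the induction and deliver the expected resource bounds.
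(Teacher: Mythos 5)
Your proposal uses essentially the same strategy as the paper: guess-and-double on the tree size, invoke the fixed-size learner \Cref{cor:DT-fixed} at each guess, verify coverage on a fresh holdout sample, and advance the guess on failure. The expected-cost analysis (once the guess reaches $s$, each subsequent round fails with vanishing probability, so the geometric sum is dominated by its last term near $s$) also matches the paper's. Two points of difference are worth flagging.

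\emph{Verification test.} You run the fixed-size learner at accuracy $\varepsilon/2$ and test the empirical uninferred fraction against a $3\varepsilon/4$ threshold, using Chernoff to argue a two-sided error of $\pm\varepsilon/8$. The paper instead uses a zero-tolerance test---any single uninferred holdout point triggers a restart---and compensates by running the fixed-size learner at the much smaller accuracy $\varepsilon' = O(\varepsilon/(s'\log(s'/\delta)))$, so that $(1-\varepsilon')^{M_{s'}}$ is close to $1$. Your threshold version is arguably cleaner and avoids the extra $1/s'$ factor in $\varepsilon'$; both give the same asymptotics because the learner's query count is only logarithmic in $1/\varepsilon'$.

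\emph{Counter tracking.} You write ``between rounds I discard all state,'' but in the bounded-memory model the algorithm has no explicit round counter---the transition function may depend only on the sample and query counters (and the fixed-size tapes). Since the number of samples a round consumes is itself random (aborts, rejection sampling, etc.), the algorithm as you describe it has no way to know which guess $s_i$ it is currently on. The paper explicitly repairs this by padding: it fixes each round's sample budget to a deterministic $N_{s'}+M_{s'}$ (inflating draws after an early abort), so the counter value deterministically encodes the round index. Your proof needs an analogous step, or some other device, before the ``discard all state'' claim is legitimate in this model. This is a fixable omission rather than a wrong idea, but it is a genuine gap in the write-up as it stands.
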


Despite being vastly more expressive than axis-aligned rectangles, decision trees in $\mathbb{R}^d$ are still simplistic in the sense that they remain axis-aligned. For our final example, we study a fundamental class without any such restriction: (non-homogeneous) halfspaces in two dimensions. Recall that a halfspace in two dimensions is given by the sign of $h=\langle v, \cdot \rangle + b$ for some $v \in \mathbb{R}^2$ and $b \in \mathbb{R}$. Following a number of prior works  \cite{jamieson2011active,karbasi2012comparison, wauthier2012active,xu2017noise,hopkins2020power,hopkins2020noise,Cui2020uncertainty}, we study the learnability of halfspaces with comparison queries. Informally, given two points of the same sign, a comparison query simply asks which is further away from the separating hyperplane $h = 0$. This type of query is natural in scenarios like halfspaces where the class has an underlying ranking. One might ask a doctor, for instance, ``which patient is sicker?''. We prove that halfspaces in two dimensions have an efficiently computable lossless compression scheme of size $O(1)$ with respect to comparison queries, and thus that they are efficiently RPU-learnable in bounded memory.
\begin{corollary}[Informal \Cref{cor:halfspace}]
The class of halfspaces over $\mathbb{R}^2$ is actively RPU-learnable in only
\[
q(\varepsilon,\delta) \leq O\left(\log\left(\frac{1}{\varepsilon\delta}\right)\right)
\]
queries, $O(1)$ memory, and time $O\left( \frac{\log^2(1/(\delta\varepsilon))}{\varepsilon}\right)$.
\end{corollary}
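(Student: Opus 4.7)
The strategy is to reduce the corollary to the meta-theorem \Cref{thm:intro-bounded}: if the class of halfspaces on $\mathbb{R}^2$ under comparison queries admits a lossless compression scheme of constant size that is computable in polynomial time, then plugging $k = O(1)$ into the quantitative bounds of \Cref{thm:intro-bounded} immediately yields query complexity $O(\log(1/(\varepsilon\delta)))$, memory $O(1)$, and running time $\tilde{O}(\log^2(1/\delta)/\varepsilon)$, matching the stated corollary. Thus the entire proof reduces to exhibiting the compression scheme.

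Given a sample $S \subseteq \mathbb{R}^2$ labeled by an unknown halfspace $h(x) = \sign(v \cdot x + b)$ together with all within-class comparison responses, the comparisons determine a total order on $P^+ = h^{-1}(1) \cap S$ by the value $v \cdot x + b$ and on $P^- = h^{-1}(-1) \cap S$ by $-(v \cdot x + b)$. In particular, from $Q_h(S)$ alone one can identify the positive point $p^*$ closest to the boundary, the negative point $n^*$ closest to the boundary, and the farthest positive and negative points $p^{**}, n^{**}$. My candidate for $W(Q_h(S))$ is $\{p^*, n^*, p^{**}, n^{**}\}$ augmented with a further constant number of ``orientation-witnessing'' extremal points from $S$, for example the positive and negative points that achieve the extreme slopes of separating lines consistent with the data; these can be identified by a convex-hull-like sweep using the comparison-induced ordering, so $|W|$ is bounded by an absolute constant.

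Verifying that $W$ is indeed a lossless compression scheme is the main technical obstacle. Writing $V(T)$ for the version space of halfspaces consistent with $Q_h(T)$, I have $V(S) \subseteq V(W)$, and I must show that every point $x$ whose label is forced under $V(S)$ also has its label forced under $V(W)$. The constraints contributed by $p^*$ and $n^*$ force the boundary line to separate $p^*$ from $n^*$ while respecting the closeness order, and the additional extremal points pin down the admissible orientations of $v$. Correctness should then follow from a planar case analysis: in the two-parameter space of halfspaces (up to scaling) the set $V(W)$ is a convex region bounded by a constant number of linear constraints, and the goal is to show that any halfspace in $V(W) \setminus V(S)$ agrees with some halfspace in $V(S)$ on every $S$-inferred point. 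The core geometric lemma I will aim for is that swinging the boundary line through $V(W)$ cannot create a sign ambiguity at any point that is unambiguous under $V(S)$, because each such swing is already witnessed by one of the retained extremal points.

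The remaining verifications are routine. The compression set $W$ is computable in time linear in $|S|$ from the ordering data, so the ``decompression'' parameter $T_{X,H,W}$ in \Cref{thm:intro-bounded} is polynomial, and the final running-time bound follows directly. The main difficulty I foresee is nailing down the correct constant-size extremal set and carrying out the 2D geometric case analysis cleanly; no new high-level technique should be required beyond a careful planar argument leveraging the comparison-induced orderings on $P^+$ and $P^-$.
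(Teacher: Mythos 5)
Your high-level reduction to \Cref{thm:intro-bounded} (equivalently \Cref{cor:bounded}) is correct, and the paper takes exactly that route: the corollary follows immediately once a constant-size, efficiently computable lossless compression scheme for halfspaces in $\mathbb{R}^2$ with comparisons is in hand. The gap is in the compression scheme itself, where your proposal diverges from what \Cref{def:LCS} actually requires and leaves the crucial geometric construction unspecified.

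First, \Cref{def:LCS} only demands compression for \emph{monochromatic} subsets $S$, and the algorithm of \Cref{cor:bounded} exploits this by learning the positive and negative regions separately. Your candidate $W = \{p^*, n^*, p^{**}, n^{**}\} \cup \cdots$ mixes signs, which is not the object you need to produce; you should fix one sign and compress within it. This is not merely cosmetic: comparison queries in this paper are only defined within a sign class, and the cross-sign structure you are leaning on (``force the boundary line to separate $p^*$ from $n^*$'') is not available through $Q_h(S)$ when $S$ is monochromatic. Second, and more substantively, the ``planar case analysis'' you defer is the entire content of the proposition, and your sketch does not identify the right invariant. The paper's construction, for a positive monochromatic $S$, is: take a point $s$ minimal in the comparison order; observe that every comparison $h(x) \geq h(y)$ yields an \emph{increasing direction} $x - y$, so each ray $s + t(x-y)$ stays positive; among all such rays choose the two making the widest angle; their conical hull $C$ (determined by $s$ and the four endpoints of the two witnessing pairs, hence $5$ points) satisfies $I(q_h(W)) = I(q_h(S)) = C$. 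The nontrivial direction is showing $I(q_h(S)) \subseteq C$, which the paper does by exhibiting, for each bounding ray, a shifted halfspace consistent with all of $q_h(S)$ that flips the sign of any point outside $C$. Your proposal correctly intuits that extremal points under the comparison-induced ordering should suffice and that the version space in $(v,b)$ is cut out by few constraints, but without the cone construction and the explicit consistent witness halfspaces, the claim that ``any halfspace in $V(W)\setminus V(S)$ agrees with some halfspace in $V(S)$ on every $S$-inferred point'' remains an assertion rather than a proof. To close the gap you should restrict to monochromatic $S$, replace the heterogeneous extremal set with the minimal point plus the two widest increasing-direction pairs, and verify losslessness by the consistency-of-shifted-halfspaces argument.
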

It should be noted that KLMZ \cite{kane2017active} remark in their work that halfspaces in two dimensions should be learnable in bounded memory, but give no indication of how this might be done. This concludes the informal statement of our results, and we end the section with a roadmap of the remainder of our work which formalizes and proves the above.

In \Cref{sec:prelim}, we discuss preliminaries, including our learning model and enriched queries, and related work. In \Cref{sec:LCS}, we formally introduce lossless sample compression and prove it is is a sufficient condition for bounded memory active RPU-learnability. In \Cref{sec:examples}, we provide three fundamental classifier families with efficiently computable lossless compression schemes with respect to basic enriched queries: axis-aligned rectangles, decision trees, and halfspaces in two dimensions. We conclude in \Cref{sec:further} with a few comments on further directions.

\section{Preliminaries}\label{sec:prelim}
\subsection{Reliable and Probably Useful Learning}\label{sec:RPU}
We study a strong model of learning introduced by Rivest and Sloan \cite{rivest1988learning} called \textit{Reliable and Probably Useful Learning} (RPU-Learning). Unlike the more standard PAC setting, RPU-Learning requires that the learner never makes an error. To compensate for this stringent requirement, the learner may respond ``I don't know'', denoted ``$\bot$,'' on a small fraction of the space. In the standard, passive version of RPU-Learning, the learner has access to a sample oracle from an adversarily chosen distribution. The goal is to analyze the number of labeled samples required from this oracle to learn almost all inputs with high probability.
\begin{definition}[RPU-Learning]\label{def:RPU}
A hypothesis class $(X,H)$ is RPU-Learnable with sample complexity $n(\varepsilon,\delta)$ if there exists for all $\varepsilon,\delta >0$ a learning algorithm $A$ such that for any choice of distribution $D$ over $X$ and $h \in H$, the learner is:
\begin{enumerate}
    \item Probably useful: 
    \[
    \Pr_{S \sim D^{n(\varepsilon,\delta)}}\left [ \Pr_{x \sim D}[A(S,h(S))(x) = \bot] > \varepsilon \right ] < \delta,
    \]
    \item Reliable: 
    \[
    \forall S,x \ \text{s.t.} \ A(S,h(S))(x) \neq \bot, A(S,h(S))(x) = h(x),
    \]
\end{enumerate}
where $S,h(S)$ is shorthand for the set of labeled samples $(x,h(x))$ for $x \in S$.
\end{definition}
In other words, the learner outputs a label with high probability, and never makes a mistake. This model of learning is substantially stronger than the more standard PAC model, which need only be approximately correct. In fact, it is known that RPU-learning with only labels often has infeasibly large (or even infinite) sample complexity \cite{Kivinen,Kivinen2,hopkins2020power}. Recently, KLMZ \cite{kane2017active} proved that this barrier can be broken by allowing the learner to ask enriched questions, and gave an efficient algorithm for RPU-learning halfspaces in two dimensions (later extended in \cite{hopkins2020power} to arbitrary dimensions with suitable distributional assumptions). While the algorithms in these works give a substantial improvement over previous impossibility results, they come with a practical caveat: reaching high accuracy guarantees requires an infeasible amount of storage. In this work we show not only how to build efficient RPU-learning algorithms for a broader range of queries and hypothesis classes, but also show that this strong model remains surprisingly feasible even in scenarios where memory is severely limited.
\subsection{Active Learning}
Unfortunately, even with the addition of enriched queries, it is not in general possible to RPU (or even PAC) learn in fewer than $\text{poly}(1/\varepsilon)$ labeled samples. In cases where labels are prohibitively expensive (e.g.\ medical imagery), this creates a substantial barrier to learning even the simplest classes such as 1D-thresholds. To side-step this issue, we consider the well-studied model of \textit{active learning}. Unlike the previously discussed (passive) models, an active learner receives \textit{unlabeled samples} from the distribution and may choose whether or not to send each sample to a labeling oracle. The overall complexity of learning, called \textit{query complexity}, is then measured not by the total number of samples, but by the number of calls to the labeling oracle required to achieve the guarantees laid out in \Cref{def:RPU}. 

One might hope that the additional adaptivity allowed by active learning allows scenarios with high labeling cost to become feasible, and indeed it does in some basic or restricted scenarios, lowering the overall complexity from $\text{poly}(1/\varepsilon)$ to $\text{poly}(\log(1/\varepsilon))$ (see e.g.\ Settles' \cite{settles2009active} or Dasgupta's \cite{dasgupta2011two} surveys, or Hanneke's book \cite{hanneke2014theory}). Unfortunately, it has long been known that active learning fails to give any substantial improvement for fundamental classes such as halfspaces, even in the weaker PAC-model \cite{dasgupta2005analysis}. We continue a recent line of work showing this barrier may be broken by the same technique that permits efficient RPU-learning: asking more informative questions.
\subsection{Enriched Queries}\label{sec:queries}
Instead of having access only to a labeling oracle, the learners we discuss in this work will have the ability to ask a range of natural questions regarding the data. Enriched queries we discuss will be defined on a fixed input size we denote by $k$. A label query, for instance, is defined on a single point and thus has $k=1$. A comparison between two points would have $k=2$. While we will only consider binary labels, we will in general consider enriched queries of an arbitrary arity $r$ denoting the total number of possible answers. Binary queries like labels or comparisons have $r=2$, but some queries we consider like the ``odd-one-out'' oracle have larger arity. Finally, we will not assume that our queries have unique valid answers. Recalling again the ``odd-one-out'' query, an instance outside a rectangle may violate multiple constraints, and thus have multiple valid answers to the (informal) question ``what is wrong with (example) $x$?''

To formalize these notions, we consider each type of query to be an oracle (function) of the form:
\[
\bigo: H \times X^k \to P([r]) \setminus \{\varnothing\},
\]
where $P([r])$ denotes the powerset of $[r]=\{1,\ldots,r\}$, and $\bigo(h,T) \subseteq [r]$ denotes the set of valid responses to the query represented by $\bigo$ on $T$ under hypothesis $h$. Since in practice a user is unlikely to list all valid responses in $\bigo(h,T)$, we do not allow the learner direct access to the oracle response. Instead, when the learner queries $T$ the adversary selects a valid response from $\bigo(h,T)$ to send back.

In this work, we consider hypothesis classes $(X,H)$ endowed with a collection of oracles $\{\bigo_i\}_{i=1}^\ell$, which we collectively denote as the \textit{query set} $Q$. While each oracle in $Q$ is defined only on a certain fixed sample size, we will often wish to make every possible query associated to some larger sample $S \subset X$. In particular, given a hypothesis $h \in H$, we denote by $Q_h(S)$ the set of all possible responses to queries on $S$.\footnote{Formally, this is the product space of valid responses to each possible query on $S$.} We will generally think of the learner making queries like this in batches on a larger set $S$, and receiving some $q(S) \in Q_h(S)$ from the adversary. Additionally, it will often be useful to consider the restriction of a given $q(S)$ to queries on some subset $S' \subset S$, which we denote by $q(S)|_{S'}$. For simplicity and when clear from context, we will write just $q(S')$ as shorthand for $q(S)|_{S'}$. While there may exist many $q(S') \in Q_h(S')$ that are not equal to $q(S)|_{S'}$, we will generally be able to assume without loss of generality that our learners do not re-query anything in $S'$, which ensures the notation is well-defined. 

\subsection{Inference}\label{sec:infer}
Similar to the framework introduced in \cite{kane2017active}, we will often wish to analyze what information is \textit{inferred} by a certain query response $q(S) \in Q_h(S)$. Let $(X,H)$ be a hypothesis class with associated query set $Q=\{\bigo_i\}$. Given a sample $S \subset X$ and query response $q(S) \in Q_h(S)$, denote by $H|_{q(S)}$ the set of hypotheses consistent with $q(S)$. For any oracle $\bigo_i$ and appropriately sized subset $T \subset X$, we say that $q(S)$ infers $\alpha \in \bigo_i(h,T)$ if $\alpha$ is a valid query response with respect to every consistent hypothesis: 
\begin{align}\label{eq:infer}
\forall h' \in H|_{q(S)}, \alpha \in \bigo_i(h',T).
\end{align}
It is worth noting that since $\bigo_i(h',T)$ may contain multiple valid responses, it is possible that $q(S)$ may infer several of them. As in the previous section, it will often be useful to consider this process in batches. In particular, given a query set $Q$, $q(S) \in Q_h(S)$, and $q(S') \in Q_h(S')$, we may wish to know when $q(S)$ infers that $q(S')$ is a valid response in $Q_h(S')$ for all $h \in H|_{q(S)}$. In such cases, we say $q(S)$ infers $q(S')$ and write:
\[
q(S) \rightarrow q(S').
\]
As in previous works \cite{kane2017active,kane2018generalized,HarPeled2020ActiveLA,hopkins2020noise,hopkins2020power}, we pay particular attention to the case where $\bigo_i=L$ is the labeling oracle (notation we will use throughout). We introduce two important concepts for this special case. Given a sample $S$ and query response $q(S)$, it will be useful to analyze the set of points in $X$ whose labels are inferred by $q(S)$. We denote this set by $I(q(S))$. Similarly, it will be useful to analyze how much of $X$ $I(q(S))$ covers (with respect to the distribution over $X$), which we call the \emph{coverage} of $q(S)$ and denote by $\text{Cov}(q(S))$. Finally, when dealing with labels we may wish to restrict the scope of our inference for the sake of computational efficiency. In such scenarios, we will define an \emph{inference rule} $R$, which for each query response $q(S)$ determines some subset $S \subseteq I_R(q(S)) \subseteq I(q(S))$. We let $\text{Cov}_R$ denote the coverage with respect the rule $R$, and $T_{I_R}(n)$ denote the inference time under rule $R$---that is the worst-case time across $x \in X$, $S \subset X$, and $q(S) \in Q_h(S)$ to determine whether $x \in I_R(q(S))$. Finally, we call the rule $R$ efficiently computable if it is polynomial in $n$. When $R$ is trivial (i.e. $\forall q(S): I_R(q(S))=I(q(S))$), we drop it from all notation.

\subsection{Bounded Memory}

The main focus of our work lies in understanding not only when active learning can achieve exponential improvement over passive learning, but when this can be done by a learner with limited memory. Previous works studying active learning with enriched queries mostly focus on the pool-based model, where the learner receives a large batch of unlabeled samples rather than access to a sampling oracle. In this case, the implicit assumption is that the learner may query any subset of samples from the pool, but this requires the learner to use a large amount of storage.

Adapting definitions from the passive learning literature \cite{h-sela-88,floyd1989space,ameur1993trial}, we define a new, more realistic model for active learning with enriched queries in which the learner may only store some finite number of points from a stream of samples. At any given step, we restrict the learner to querying only points it currently remembers. More formally, we consider learners equipped with two tapes, the \textit{query} and \textit{work} tapes, and two counters, the \textit{sample} and \textit{query} counters. The query tape stores points in the instance space $X$ that the learner has saved and may wish to query. The work tape, on the other hand, stores bits which provide any extra information about these points needed for computation---typically this entails query responses, but we will see cases where other types of information are useful as well. The sample and query counters, true to name, track the total number of unlabeled samples drawn and queries made by the algorithm at any given step.

We note that the complexity of the query tape is measured in the number of points stored there at any given time, rather than in the total number of bits required to represent it. This matches early works on bounded-memory learning in the passive regime \cite{h-sela-88,floyd1989space,ameur1993trial} and is necessary due to the fact that we are interested mainly in working over infinite instance spaces like the reals (where representing even a single point may take infinite bits). It is also worth noting that this avoids the fact that different representations of data may have different bit complexities---given a certain representation of the data (say with finite bit-complexity), it is easy to convert our memory bounds if desired to a model counting only bits. 

We now discuss our model in greater depth. Given the query and work tapes, the learner may choose at each step from the following options:
\begin{enumerate}
    \item Sample a point and add it to the query tape.
    \item Remove a point from the query tape.
    \item Query any subset of points on the query tape, writing the results on the work tape.
    \item Write or remove a bit from the work tape.
\end{enumerate}
Further, as in previous work \cite{ameur1993trial}, we allow the action taken by the algorithm at any step to depend on the contents of the query and sample counters. This can be formalized in one of two ways. The first, considered implicitly by Ameur et al.\ \cite{ameur1993trial}, is to think of the algorithm as governed by a non-uniform transition function that may depend on the entire content of the sample and query counters. We will generally take this view throughout the paper since it is simpler, but if one wishes to use a uniform model of computation, another method is to allow the algorithm to run ``simple'' randomized procedures that only take about loglog space in the size of the counter. Since in general our algorithms use at most $n=\text{poly}(\varepsilon^{-1},\log(1/\delta))$ samples, the latter view essentially allocates a special block of $O(\log\log(1/\varepsilon) + \log\log(1/\delta))$ memory\footnote{In a bit more detail, we can use approximate counting to probabilistically estimate the counter up to a small constant factor with probability at least $1-\delta$ in space $O(\log(\log(n)) + \log(\log(1/\delta)))$ \cite{nelson2020optimal}, which explains the discrepancy in dependence on $\delta$ in these two equations.} to deal with the counter. We note that because we allow this procedure to be randomized, this version of the model requires expected rather than worst-case bounds on query and computational complexity.

With this in mind, we say that a hypothesis class is \textit{RPU-learnable with bounded memory} if there exists an RPU-learner for the class whose query and work tapes never exceed some constant $M(X,H)$ length independent of the learning parameters $\varepsilon$ and $\delta$. At a finer grain level, we say that such a class is learnable with memory $M(X,H)$. We emphasize that as in \cite{ameur1993trial}, we do not measure the memory usage of the counter. Indeed, it is not hard to see that some sort of counter or memory scaling with $\varepsilon$ and $\delta$ is necessary to give a stopping condition for the learner. Finally, we note that while previous techniques such as the inference dimension algorithm of \cite{kane2017active} can certainly be modified to fit into the above framework, they do not result in bounded memory learners, requiring storage that scales with $\varepsilon$ and $\delta$ in both the query and work tapes.
\subsection{Related Work}
\subsubsection{Bounded Memory Learning}
Bounded memory learning in the sense we consider was first introduced in an early work by Haussler \cite{h-sela-88}, who showed the existence of passive PAC-learners for restricted classes of decision trees and basic functions on $\mathbb{R}$ such as finite unions of intervals with memory independent of the accuracy parameters $\varepsilon$ and $\delta$.  Floyd \cite{floyd1989space}, and later Ameur, Fischer, Hoffgen, and Meyer auf der Heide \cite{ameur1993trial}, extended Haussler's work to a general theory of bounded memory passive learning including features we consider such as a sample counter. The latter in particular give necessary and sufficient conditions based upon a compression scheme stronger than standard sample compression, but weak enough to cover important classifiers such as halfspaces. In the decades since, many variants and relaxations of bounded memory learning (e.g. memory scaling with $\varepsilon,\delta$, learning a finite stream, storing only bits, time-space tradeoffs, etc.) have seen a substantial amount of study \cite{shamir2014fundamental,steinhardt2016memory,raz2017time,moshkovitz2017mixing,moshkovitz2017general,moshkovitz2018entropy,beame2018time,raz2018fast,garg2018extractor,sharan2019memory,dagan2019space,gonen2020towards,assadi2020near}. To our knowledge, however, the subject has seen little to no work within the active learning literature, though a few do consider query settings beyond just labels including statistical \cite{steinhardt2016memory,gonen2020towards} and equivalence queries \cite{ameur1995space}.
\subsubsection{Active Learning with Enriched Queries}
Active learning with enriched queries has become an increasingly popular alternative to the standard model in cases like halfspaces where strong lower bounds prevent adaptivity from providing a significant advantage over standard passive learning. While most prior works in the area consider specific examples of enriched queries such as comparisons \cite{jamieson2011active,karbasi2012comparison, wauthier2012active,xu2017noise,hopkins2020power,hopkins2020noise,Cui2020uncertainty}, cluster-queries \cite{ashtiani2016clustering, vikram2016interactive}, mistake queries \cite{balcan2012robust}, and separation queries \cite{HarPeled2020ActiveLA}, our work is more closely related to the general paradigm for enriched query active learning introduced by KLMZ \cite{kane2017active}. In their work, KLMZ introduce \textit{inference dimension}, a combinatorial parameter that exactly characterizes when a concept class is actively learnable in $O(\log(1/\varepsilon))$ rather than $O(1/\varepsilon)$ queries. Lossless sample compression can be seen as a strengthening of inference dimension (albeit extended to a richer regime of queries than that considered in \cite{kane2017active}) which implies both $O(\log(1/\varepsilon))$ query complexity and bounded memory. This partially answers an open question posed by KLMZ, who asked whether any class with finite inference dimension has a bounded memory learner.

\section{Lossless Sample Compression}\label{sec:LCS}

In this section, we prove that lossless sample compression is sufficient for query-optimal, bounded memory RPU-learning. While we introduced the concept of lossless compression in  \Cref{sec:intro}, we are now in position to state the full formal definition.
\begin{definition}[Lossless Compression Schemes]\label{def:LCS}
We say a hypothesis class $(X,H)$ has a lossless compression scheme (LCS) of size $k$ with respect to a query set $Q$ and inference rule $R$ if for all classifiers $h \in H$, monochromatic subsets\footnote{A subset $S$ is monochromatic with respect to a classifier $h$ if it consists entirely of one label.} $S \subset X$, and any $q(S) \in Q_h(S)$ there exists $W=W(q(S)) \subseteq S$ of size at most $k$ such that:
\[
I_R(q(S)) = I_R(q(S)|_W).
\]
Let $T_C(n)$ denote the worst-case time required to determine such a subset. We call the LCS efficiently computable if $T_C(n)$ it is polynomial in $n$. If no inference rule $R$ is stated, it is assumed to be the trivial rule.
\end{definition}

It is worth briefly noting the intuition behind requiring compression only for monochromatic sets. The general idea is that in RPU-learning it is sufficient to be able to learn $X$ restricted to the set of positive and negative points. While this strategy fails in the weaker PAC-model (the learner could output the all $1$'s function for positive points for instance), the fact that RPU-learners cannot make mistakes circumvents this issue.

We will begin by proving that lossless compression implies sample-efficient passive RPU-learning, and then show how this can be leveraged to give query-efficient and bounded-memory active learning. In fact, if all one is interested in is the former, lossless sample compression is needlessly strong. As a result, we start by analyzing a strictly weaker variant that lies in between standard and lossless sample compression, and bears close similarities to KLMZ's \cite{kane2017active} theory of inference dimension.
\begin{definition}[Perfect Compression Schemes (PCS)]
We say a hypothesis class $(X,H)$ with corresponding query set $Q$ has a perfect compression scheme (PCS) of size $k$ with respect to inference rule $R$ if for all subsets $S \subset X$, $h \in H$, and any $q(S) \in Q_h(S)$ there exists $T=T(Q(S)) \subseteq S$ of size at most $k$ such that:
\[
S \subseteq I_R(q(S)|_T).
\]
\end{definition}
Thus in a perfect compression scheme, one need only recover the labels of the original sample, while lossless compression requires that any labels \emph{inferred} by queries on the original sample are also preserved. Rather than directly analyzing the effect of PCS on active learning, we first prove as an intermediate theorem that such schemes are sufficient for near-optimal passive RPU-learnability. Combining this fact with the basic boosting procedure of \cite{kane2017active} gives query-optimal (but not bounded memory) active RPU-learnability. The argument for the passive case closely follows the seminal work of Floyd and Warmuth \cite[Theorem 5]{floyd1995sample} on learning via sample compression.
\begin{theorem}\label{passive-perfect}
Let $(X,H)$ have a perfect compression scheme of size $k$ with respect to inference rule $R$. Then the sample complexity of passively RPU-learning $(X,H)$ is at most
\[
n(\varepsilon,\delta) \leq O\left(\frac{k\log(1/\varepsilon) + \log(1/\delta)}{\varepsilon} \right).
\]
\end{theorem}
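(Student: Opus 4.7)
The plan is to follow the classical Floyd--Warmuth sample compression argument, adapted to our inference-aware setting. Define the passive learner $A$ as follows: on input a labeled sample $S$ of size $n$ (together with the query responses $q(S)$ dictated by the adversary), run the PCS to obtain a subset $T = T(q(S)) \subseteq S$ of size at most $k$ with $S \subseteq I_R(q(S)|_T)$. Output the hypothesis $h_T$ which labels a point $x \in X$ according to any label inferred from $q(S)|_T$ under the rule $R$ (such a label is unique whenever it exists, since inferences are sound with respect to $H|_{q(T)}$), and outputs $\bot$ otherwise.

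Reliability is immediate: the true classifier $h$ lies in $H|_{q(T)}$, so whenever $h_T(x) \neq \bot$ the label equals $h(x)$ by the definition of inference in Equation~(\ref{eq:infer}). The nontrivial part is the probably useful guarantee, i.e.\ bounding
\[
\Pr_{S \sim D^n}\!\left[1-\text{Cov}_R(q(S)|_T) > \varepsilon\right] \leq \delta.
\]
Here I would run the standard double-sampling-style union bound, but directly. For every index set $I \subseteq [n]$ with $|I| \leq k$, let $E_I$ denote the event that the compression selects exactly the subsample indexed by $I$ and that the resulting hypothesis has coverage less than $1-\varepsilon$. The bad event is contained in $\bigcup_I E_I$.

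For each fixed $I$, condition on the values $(x_i)_{i \in I}$; then $q(S)|_I$ is determined (by the adversary's response function on queries over these $k$ points). If $E_I$ occurs, then on one hand the coverage of $q(S)|_I$ is less than $1-\varepsilon$, so each remaining sample lies in $I_R(q(S)|_I)$ with probability at most $1-\varepsilon$; on the other hand, the PCS property forces all $n-k$ remaining samples to lie in $I_R(q(S)|_I)$, since $S \subseteq I_R(q(S)|_T) = I_R(q(S)|_I)$. By independence of the remaining samples, the conditional probability of $E_I$ is at most $(1-\varepsilon)^{n-k}$. Summing over at most $\binom{n}{k} \leq (en/k)^k$ choices of $I$ of size at most $k$ (absorbing a harmless $k+1$ factor from summing over sizes $0,\ldots,k$) gives
\[
\Pr[\text{bad}] \leq (k+1)\binom{n}{k}(1-\varepsilon)^{n-k} \leq (k+1)\left(\tfrac{en}{k}\right)^{k} e^{-\varepsilon(n-k)}.
\]
Setting $n = C\!\left(\frac{k\log(1/\varepsilon) + \log(1/\delta)}{\varepsilon}\right)$ for a sufficiently large absolute constant $C$ makes this at most $\delta$, yielding the stated bound.

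The main delicate point is handling the dependence of the PCS output on the entire sample $S$, not just on $T$: the compression algorithm may examine every $x_j$ when deciding which $k$ to keep. The union-bound-then-condition approach above sidesteps this cleanly, because once we union-bound over the chosen index set $I$, conditioning on the corresponding $k$ points makes $q(S)|_I$ deterministic and decouples the remaining $n-k$ samples, whose joint behaviour is controlled purely by the coverage of $q(S)|_I$. Everything else is a routine calculation to solve the inequality for $n$.
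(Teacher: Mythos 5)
Your argument is correct and is essentially the same Floyd--Warmuth-style compression argument the paper uses: union-bound over index sets $I \subseteq [n]$ of size at most $k$, condition on the $I$-indexed points, and use independence of the remaining $n-k$ samples to bound the probability that they all land in a set of coverage less than $1-\varepsilon$ by $(1-\varepsilon)^{n-k}$. The only cosmetic difference is that you fold the paper's two events $B_I$ (all of $S$ inferred by the $I$-subsample) and $U_I$ (that subsample has coverage below $1-\varepsilon$) into a single event $E_I$ and then observe $E_I \subseteq B_I \cap U_I$; the counting and the choice of $n$ are identical.
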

\begin{proof}
Let $h\in H$ be an arbitrary classifier. Our goal is to upper bound by $\delta$ the probability that for a random sample $S$, $|S|=m$, there exists $q(S) \in Q_h(S)$ such that $\text{Cov}_R(q(S)) \leq 1-\varepsilon$. Notice that it is equivalent to prove that for some fixed worst-case choice of $q(S)$ (minimizing coverage across each sample):
\[
\Pr_S[\text{Cov}_R(q(S)) \leq 1-\varepsilon] \leq \delta.
\]
Given this formulation, the argument proceeds along the lines of \cite[Theorem 5]{floyd1995sample}. It is enough to upper bound by $\delta$ the probability across samples $S$ of size $m$ that there exists $T \subset S$ of size at most $k$ such that:
\begin{enumerate}
    \item $S \subseteq I_R(q(S)|_T)$
    \item $\text{Cov}_R(q(S)|_T) < 1-\varepsilon$
\end{enumerate}
Since the PCS implies that every set $S$ has a subset $T$ satisfying the first condition, if both hold only with some small probability then the following basic algorithm $A$ suffices to RPU-learn with sample complexity $m$: on input $x \in X, A(S)(x)$ checks whether all $h \in H$ consistent with $q(S)$ satisfy $L(h,x)=z$ for some $z\in \{0,1\}$.\footnote{Note that this process is class-dependent.} If this property holds, the algorithm outputs $z$. Otherwise, the algorithm outputs ``$\bot$.'' Since with probability at least $1-\delta$ the coverage of $q(S)$ is at least $1-\varepsilon$, this algorithm will label at least a $1-\varepsilon$ fraction of the space while never making a mistake. 

Proving this statement essentially boils down to a double sampling argument. The idea is to union bound over sets of indices in $[m]$ of size up to $k$, noting that in each case the remaining $m-k$ points can be treated independently. In greater detail, for $I \subset [m]$, denote by $B_I$ the set of samples $S$ which are inferred by the subsample with indices given by $I$, that is:
\[
B_I = \{ S=\{s_1,\ldots,s_m\}: S \subseteq I_R(q(S)|_{\{s_i\}_{i \in I}}) \}.
\]
On the other hand, let $U_I$ denote samples where the coverage of the subsample given by $I$ is worse than $1-\varepsilon$:
\[
U_I = \left\{ S=\{s_1,\ldots,s_m\}: \text{Cov}_R\left(q(S)|_{\{s_i\}_{i \in I}}\right) < 1-\varepsilon \right\}.
\]
Notice that the intersection of $B_I$ and $U_I$ is exactly what we are trying to avoid. By a union bound, the probability that we draw a sample such that there exists a subset $T$ satisfying 1 and 2 is then at most:
\[
\sum\limits_{I \subset [m], |I| \leq k} \Pr_{S}[S \in B_I \cap U_I].
\]
It is left to bound the probability for fixed $I \subset [m]$ of drawing a sample in $B_I \cap U_I$. Since we are sampling i.i.d, we can think of independently sampling the coordinates in and outside of $I$. If the samples given by $I$ have coverage at least $1-\varepsilon$, we are done. Otherwise, the probability that we draw remaining samples that are in the coverage is at most $(1-\varepsilon)^{m-|I|}$. Thus we get that the event is bounded by:
\[
(1-\varepsilon)^{m-k}\sum\limits_{i=1}^k {m \choose i},
\]
which is at most $\delta$ for $m = O\left(\frac{k\log(1/\varepsilon) + \log(1/\delta)}{\varepsilon} \right)$
\end{proof}
\Cref{passive-perfect} implies that hypothesis classes with perfect compression schemes may be passively learned with nearly equivalent asymptotic sample complexity to the much weaker PAC-model (off only by a $\log(1/\varepsilon)$ factor recently removed by Hanneke \cite{hanneke2016optimal} in the latter case). Further, since perfect sample compression is preserved over subsets of the instance space (a PCS for $X$ is also a PCS when restricted to $S \subset X$), a bound on passive RPU learning immediately implies efficient active learning via a modification to the basic boosting strategy for finite instance spaces of \cite[Theorem 3.2]{kane2017active}.
\begin{theorem}[Active Learning]\label{cor:AL}
Let $(X,H)$ have a perfect compression scheme of size $k$ with respect to query set $Q$ and inference rule $R$. Then $(X,H)$ is actively RPU-learnable in only
\[
q(\varepsilon,\delta) \leq O\left( b(6k)\log\left(\frac{1}{\varepsilon\delta}\right)\right)
\]
queries, and
\[
T(\varepsilon,\delta) \leq O\left(\left(t(6k)+ \frac{kT_{I_R}(60k\log(1/(\varepsilon\delta)))\log(k/(\varepsilon\delta))}{\varepsilon}\right)\log\left(\frac{1}{\varepsilon\delta}\right)\right)
\]
time where $b(n)$ is the worst-case number of queries needed to infer some valid $q(S) \in Q_h(S)$ across all $h \in H$ and $|S|=n$, and $t(n)$ is the worst-case time.
\end{theorem}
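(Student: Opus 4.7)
The plan is to boost the passive RPU-learner of \Cref{passive-perfect} into an active learner via the iterative coverage-expansion strategy of~\cite{kane2017active}. Perfect compression restricts naturally to any subset of $X$, so we can repeatedly apply the passive guarantee to the conditional distribution on whatever portion of $X$ has not yet been covered. Because \Cref{passive-perfect} requires only $O(k)$ samples to achieve coverage $\tfrac{1}{2}$ with constant probability, each iteration will halve the uncovered mass with constant probability, so $T=O(\log(1/(\varepsilon\delta)))$ iterations will suffice to drive it below $\varepsilon$.

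Concretely, initialize $S=\varnothing$ and inferred region $I=\varnothing$, and for $i=1,\dots,T$: (i) use rejection sampling on the sample oracle $\bigo_X$ to draw a fresh $S_i\subseteq X\setminus I$ of size $6k$; (ii) make the $b(6k)$ queries needed to fix some $q(S_i)\in Q_h(S_i)$; and (iii) append $S_i$ and its responses to $S$, growing $I$ to include the newly inferred labels. Return the classifier that labels $x\in I$ by its inferred value and returns $\bot$ elsewhere. Reliability is automatic: every output label is inferred from honest queries against the true $h$, so by the definition of inference it must agree with $h$.

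For the probably-useful guarantee, observe that at step $i$ the PCS of $(X,H)$ still applies to any $S\subseteq X\setminus I$, and the fresh sample $S_i$ is i.i.d.\ from the conditional distribution on $X\setminus I$. Instantiating \Cref{passive-perfect} with $\varepsilon'=\delta'=\tfrac{1}{2}$ shows that $6k$ samples suffice to cover half of this conditional distribution with probability $\geq \tfrac{1}{2}$; equivalently, the measure of the uncovered region under the original distribution halves with probability $\geq \tfrac{1}{2}$. A Chernoff bound over these $T$ independent trials then gives that with probability $\geq 1-\delta$ at least $\log_2(1/\varepsilon)$ halvings occur, so the uncovered mass ends below $\varepsilon$.

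The resource bounds follow by tallying iteration-by-iteration. Queries total $T\cdot b(6k)=O(b(6k)\log(1/(\varepsilon\delta)))$. For time, each iteration spends $t(6k)$ on step (ii); for step (i), a standard Chernoff plus union bound argument shows that $O((k/\varepsilon)\log(k/(\varepsilon\delta)))$ draws from $\bigo_X$ suffice to produce $6k$ accepted points whenever the uncovered mass exceeds $\varepsilon$ (with per-iteration failure probability $\leq \delta/T$), and each drawn point is tested for membership in $I$ in time at most $T_{I_R}(|S|)\leq T_{I_R}(60k\log(1/(\varepsilon\delta)))$. Multiplying through by $T$ recovers the stated bound. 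The only real technical delicacy is coordinating the rejection-sampling failure events via a union bound across iterations and across the inner trials—routine once set up correctly, and contributing only the logarithmic factors already visible in the theorem statement.
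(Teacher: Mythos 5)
Your proposal matches the paper's proof essentially exactly: both boost \Cref{passive-perfect} to the active setting by repeatedly rejection-sampling a fresh size-$6k$ set from the uninferred region, querying it, and arguing via Chernoff that $O(\log(1/(\varepsilon\delta)))$ rounds each halving the uncovered mass with probability $\geq 1/2$ drive it below $\varepsilon$ with probability $\geq 1-\delta$, with the same per-round query count $b(6k)$ and the same per-draw inference cost $T_{I_R}$ giving the stated time bound. The only presentational difference is that the paper phrases the ``too many total draws'' event as an explicit second stopping condition (abort and return the current classifier), whereas you leave the abort implicit in the per-iteration rejection-sampling budget; making that cutoff explicit would tighten the argument against the case where the uncovered mass falls below $\varepsilon$ mid-round, but the substance is the same.
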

\begin{proof}
For notational convenience, let $X'$ be a copy of $X$ we use to track un-inferred points throughout our algorithm. Consider the following strategy:
\begin{enumerate}
    \item Sample $S$ of size $6k$ from $X'$ (note this may require taking many samples from $X$ in later rounds)
    \item Infer some $q(S) \in Q_h(S)$ via queries on $S$
    \item Restrict $X'$ to points whose labels are not inferred by $q(S)$, (i.e. remove any $x \in X'$ s.t.\ $q(S) \rightarrow L(h,x)$)\footnote{In reality this is done by rejection sampling in step 1. When a point is drawn, we check whether it is inferred by queries on any previous sample.}
    \item Repeat $O\left(\log\left(\frac{1}{\varepsilon\delta}\right)\right)$ times, or until $n=O\left(\frac{k\log(k/(\varepsilon\delta))\log(1/(\varepsilon\delta))}{\varepsilon}\right)$ total points have been drawn from $X$.
\end{enumerate}
Notice that for either of these stopping conditions, one of two statements must hold:
\begin{enumerate}
    \item The algorithm has performed at least $O\left(\log\left(\frac{1}{\varepsilon\delta}\right)\right)$ rounds.
    \item The algorithm has drawn $O\left(\frac{\log\left(\frac{n}{\delta}\right)}{\varepsilon}\right)$ inferred samples in a row.
\end{enumerate}
We argue both of these conditions imply that the coverage of the learner is at least $1-\varepsilon$ with probability at least $1-\delta$. For the first condition, notice that \Cref{passive-perfect} implies the coverage of $q(S)$ on a random sample of size $6k$ is at least $1/2$ with probability at least $1/2$. Call a round `good' if it has coverage at least $1/2$. It is sufficient to prove we have at least $\log(1/\varepsilon)$ good rounds with probability at least $1-\delta$. Since each round can be thought of as an independent process, this follows easily from a Chernoff bound. For the latter condition, notice that the probability $O\left(\frac{\log\left(\frac{n}{\delta}\right)}{\varepsilon}\right)$ inferred samples appear in a row (at some fixed point) when the coverage is less than $1-\varepsilon$ is at most $O(\delta/n)$. Union bounding over samples implies the algorithm has the desired coverage guarantees.

Finally, we compute the query and computational complexity. The former follows immediately from noting that we make at most $b(6k)$ queries in each round, and run at most $O\left(\log\left(\frac{1}{\varepsilon\delta}\right)\right)$ rounds. The latter bound stems from noting that each round takes at most $t(6k)$ time to compute queries, and each \emph{sample} requires at most $T_{I_R}(ck\log(1/\varepsilon\delta))$ time for inference for some $60>c>0$.
\end{proof}
A similar result may also be proved by defining a suitable extension of inference dimension to our generalized queries and noting that a perfect compression scheme implies finite inference dimension. It is also worth noting that in most cases of interest the above algorithm will also be computationally efficient, as the cost of compression, querying, rejection sampling, and inference tend to be fairly small (and are in all examples we consider). Finally, we show how \Cref{cor:AL} can be modified with the addition of an efficiently computable lossless compression scheme to give query-optimal, computationally efficient, and bounded memory RPU-learning.

\begin{theorem}[Bounded Memory Active Learning]\label{cor:bounded}
Let $(X,H)$ have an LCS of size $k$ with respect to query set $Q$ and inference rule $R$, and let $B(n)$ denote the maximum number of bits required to express any $q(S) \in Q_h(S)$ for $S \subset X$ of size $n$ and $h \in H$. Then $(X,H)$ is actively RPU-learnable in only:
\[
q(\varepsilon,\delta) \leq O\left(b(6k)\log\left(\frac{1}{\varepsilon\delta}\right)\right)
\]
queries,
\[
M(X,H) \leq O(B(7k))
\]
memory, and
\[
T(\varepsilon,\delta) \leq O\left(\left(t(7k)+T_C(7k) + \frac{kT_{I_R}(k)\log(k/(\varepsilon\delta))}{\varepsilon}\right)\log\left(\frac{1}{\varepsilon\delta}\right)\right)
\]
time.
\end{theorem}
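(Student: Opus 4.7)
The plan is to analyze Algorithm~\ref{alg:intro} from the introduction. Its structure is that of the boosting-style active learner underlying Theorem~\ref{cor:AL}, but at the end of each round it applies the LCS to reduce the stored information down to a set $C_{i+1} \subseteq S_i \cup C_i$ of size at most $k$. The lossless property guarantees
\[
I_R(q(S_i \cup C_i)|_{C_{i+1}}) = I_R(q(S_i \cup C_i)),
\]
so no inferential power is lost by discarding the other points and their query responses before the next round begins. All three bounds in the theorem follow from combining this observation with the analysis already used for Theorem~\ref{cor:AL}.

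First, I would establish correctness by an inductive invariant on $i$: $I_R(q(C_i))$ contains the inferred region produced by every earlier round. The base case is trivial, and the inductive step is exactly the lossless identity above applied to $S_{i-1} \cup C_{i-1}$. Given this invariant, rejection sampling against $C_i$ draws points from $D$ conditioned on being uninferred by the entire cumulative history, which is precisely what the coverage argument of Theorem~\ref{cor:AL} requires. Plugging in that argument shows that when the algorithm terminates (either because it has executed $T_1 = O(\log(1/(\varepsilon\delta)))$ rounds or because it has consumed $T_2 = \tilde{O}(k\log^2(1/\delta)/\varepsilon)$ raw samples), the coverage of $q(C_i)$ is at least $1-\varepsilon$ with probability at least $1-\delta$. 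Reliability follows from the fact that outputs are produced only via inference, which by definition never produces an incorrect label.

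Second, I would verify the three complexity bounds in turn. For \emph{queries}: each round makes at most $b(7k) = O(b(6k))$ queries on $S_i \cup C_i$ (of size at most $7k$), and there are $O(\log(1/(\varepsilon\delta)))$ rounds. For \emph{memory}: the query tape holds at most $7k$ points, namely $S_i \cup C_i$; the work tape holds $q(S_i \cup C_i)$ in $B(7k)$ bits; the sample and query counters are not charged by the model. After the compression step, all points and query responses outside $C_{i+1}$ are erased before the next round, so the footprint is $O(B(7k))$ throughout. For \emph{time}: each round pays $t(7k)$ to make queries, $T_C(7k)$ to compute the compression, and the rejection-sampling cost. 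Here the key saving over Theorem~\ref{cor:AL} is that inference checks are made against $C_i$ of size at most $k$ (rather than against a cumulative history of size $\Theta(k\log(1/(\varepsilon\delta)))$), so the per-candidate cost is $T_{I_R}(k)$. Multiplying round costs by $T_1$ and summing the rejection-sampling cost across the at most $T_2$ drawn samples yields the stated bound.

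The main obstacle is ensuring that the coverage/boosting argument of Theorem~\ref{cor:AL} transfers cleanly despite the compression step, since the algorithm no longer remembers the full query history. This is exactly where losslessness is essential: with only a perfect compression scheme, the inductive invariant $I_R(q(C_i)) \supseteq \bigcup_{j<i} I_R(q(S_j \cup C_j))$ can already fail after a single round, because a PCS only needs to recover the labels of the stored sample, not the full inferred region. The LCS condition is precisely what makes the compressed sequence of rounds statistically indistinguishable from the uncompressed algorithm in the point distribution that survives rejection sampling, while collapsing the per-round memory footprint from $\tilde{O}(k\log(1/(\varepsilon\delta)))$ points down to the constant $7k$ needed for a single round.
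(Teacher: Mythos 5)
Your overall plan is the same as the paper's: run the boosting strategy of Theorem~\ref{cor:AL}, but compress $S_i \cup C_i$ down to $C_{i+1}$ at the end of each round, and argue by induction that $I_R(q(C_i)) \supseteq \bigcup_{j<i} I_R(q(S_j))$ so no inferential power needed for the coverage argument is lost. The complexity accounting (queries, memory, time) also matches. However, there is a genuine gap in correctness: Definition~\ref{def:LCS} only guarantees a compression for \emph{monochromatic} subsets $S \subset X$, whereas the set $S_i \cup C_i$ you are compressing will in general contain points of both labels. Applied naively, the LCS gives you nothing here. The paper handles this by running the procedure separately for the positive and negative portions of the space: at each step it draws from the un-inferred region until it collects either $6k$ positive or $6k$ negative points, compresses only the monochromatic track that filled up, and then argues that because at least one of the two labels occupies measure at least $1/2$ of the un-inferred region, a constant fraction of rounds still make constant progress. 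Running a constant factor more rounds then recovers the guarantees. Without this positive/negative split, the inductive invariant you rely on cannot even be stated through the LCS, so the coverage argument does not transfer. You should add this mechanism (and a short argument that it only costs a constant factor in rounds) to complete the proof.
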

\begin{proof}
We follow the overall strategy laid out in \Cref{cor:AL} with two main modifications. First, instead of storing subsamples and queries from all previous rounds to use for rejection sampling, each sub-sample will be merged in between rounds using the guarantees of lossless sample compression. Without this change, the strategy of \Cref{cor:AL} would require $O\left (B(6k)\log\left ( \frac{1}{\varepsilon\delta}\right) \right )$ memory from the buildup across rounds. Second, we will learn the set of positive and negative points separately since the LCS only guarantees compression for monochromatic subsamples.

More formally, assume for the moment that we draw samples from the distribution restricted to positive (or negative) labels. Following the strategy of \Cref{cor:AL}, let the sample of size $6k$ drawn at step $i$ be denoted by $S_i$, the points stored in the query-tape at the start of step $i$ by $C_{i-1}$, and their corresponding query response $q(C_{i-1})$. 
The existence of an LCS of size $k$ implies for any $h \in H$ and $q(S_i \cup C_{i-1}) \in Q_h(S_i \cup C_{i-1})$, there exists a subset $C_{i} \subset S_i \cup C_{i-1}$ of size at most $k$ such that: 
\[
I_R(q(C_i)) = I_R(q(S_i \cup C_{i-1})),
\]
where we recall $q(C_i)$ is shorthand for $q(S_i \cup C_{i-1})|_{C_i})$. Further, since restrictions have strictly less information (that is for all $S,S' \subseteq X$ and $q(S \cup S') \in Q_h(S \cup S')$, $I_R(q(S)) \cup I_R(q(S')) \subseteq I_R(q(S \cup S'))$, we may write:
\begin{align*}
I_R(q(S_i)) \cup I_R(q(C_{i-1})) &\subseteq I_R(q(S_i \cup C_{i-1})) = I_R(q(C_i)).
\end{align*}
By induction on the step $i$, $q(C_{i-1})$ infers the label of every point in $I(q(S_j))$ for $j < i$, and therefore:
\[
\bigcup_{j=1}^i I_R(q(S_j)) \subseteq
I_R(q(C_i)).
\]
Since the coverage guarantees of \Cref{cor:AL} at a given step $i$ rely only on this left-hand union, following the same strategy (plus compression) still results in at least $1-\varepsilon$ coverage with probability at least $1-\delta$ in only $O(\log(\frac{1}{\varepsilon\delta}))$ rounds. Further, since we merge our stored information every round into $C_i$ and $q(C_i)$, we never exceed storage of $7k$ points and $O(B(7k))$ bits. The analysis of query and computational complexity follow the same as in \Cref{cor:AL} with the addition of compression time. 

Finally, we argue that we may learn the general distribution by separately applying the above to the set of positive and negative samples. Namely, at step $i$ we sample from the remaining un-inferred points until we receive either $6k$ positive or $6k$ negative samples, and apply the standard algorithm on whichever is reached first. Assume at step $i$ that the measure of remaining positive points is at least that of the remaining negative (the opposite case will follow similarly). Then the probability that we draw $6k$ positive points before $6k$ negative points is at least $1/2$. Recall that such a positive sample has coverage at least $1/2$ over the remaining positive points with probability at least $1/2$. Moreover, since the positive samples make up at least $1/2$ of the space, the coverage of each round over the \emph{entire} distribution is at least $1/4$ with probability at least $1/4$ (factoring in the probability we draw the majority sign). Achieving the same guarantees as above then simply requires running a small constant times as many rounds. Thus there is no asymptotic change to any of the complexity measures and we have the desired result.
\end{proof}
We note that in all examples considered in this paper, $b(n)$, $T_{I_R}(n)$, and $T_C(n)$ are at worst quadratic, resulting in computationally efficient algorithms. Finally, it is worth briefly discussing the fact that it is possible to remove the query counter in the proof of \Cref{cor:bounded} if one is willing to measure \emph{expected} rather than \emph{worst-case} query-complexity. This mainly involves running the same algorithm using only the sample cutoff and analyzing the probability that the algorithm makes a large number of queries.

\section{Three Classes with Finite LCS}\label{sec:examples}
In this section we cover three examples of fundamental classes with small, efficiently computable lossless compression schemes: axis-aligned rectangles, decision trees, and halfspaces in two dimensions. In each of these cases standard lower bounds show that without additional queries, active learning provides no substantial benefit over its passive counterpart \cite{dasgupta2005analysis}. Furthermore, these classes cannot be actively RPU-learned at all (each requires an infinite number of queries) \cite{hopkins2020noise}. Thus we see that the introduction of natural enriched queries brings learning from an infeasible or even impossible state to one that is highly efficient, even on a very low memory device.

\subsection{Axis-Aligned Rectangles}
Despite being one of the most basic classes in machine learning, axis-aligned rectangles provide an excellent example of the failure of both standard active and RPU-learning. In this section we show that the introduction of a natural enriched query, the \textit{odd-one-out} query, completely removes this barrier, providing a query-optimal, computationally efficient RPU-learner with bounded memory. Recall that axis-aligned rectangles over $\mathbb{R}^d$ are the indicator functions corresponding to products of intervals over $\mathbb{R}$:
\[
R = [a_1,b_1] \times \ldots \times [a_d,b_d]
\]
for $a_i \leq b_i$. Additionally we allow $a_i$ to be $-\infty$, and $b_i$ to be $\infty$ (though in this case the interval should be open). In other words, thinking of each coordinate in $\mathbb{R}^d$ as a feature, axis-aligned rectangles capture scenarios where each individual feature has an acceptable range, and the overall object is acceptable if and only if every feature is acceptable. For instance, one might measure a certain dish by flavor profile, including features such as saltiness, sourness, etc. If the dish is too salty, or too sour, the diner is unlikely to like it independent of its other features. In this context, the ``odd-one-out'' query asks the diner, given a negative example, to pick a specific feature they did not like, and further to specify whether the feature was either missing, or too present. Perhaps the dish was too sour, or needed more umami. 

More formally, the odd-one-out oracle $\bigo_{\text{odd}}: H \times X \to P\left (\left ([d] \times \{0,1\}\right ) \cup \{*\}\right)$ on input $h=[a_1,b_1] \times \ldots \times [a_d,b_d]$ and $x \in \mathbb{R}^d$ outputs $\{*\}$ if $L(h,x)=1$, and otherwise outputs the set of pairs $(i,1)$ such that $x_i>b_i$ and $(i,0)$ such that $x_i < a_i$.
\begin{proposition}\label{prop:rect}
The class of axis-aligned rectangles over $\mathbb{R}^d$ has a lossless compression scheme of size at most $2d$ with respect to $\bigo_{\text{odd}}$.
\end{proposition}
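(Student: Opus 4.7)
The plan is to split into two cases depending on whether the monochromatic sample $S$ is all positive or all negative, and in each case exhibit an explicit subset $W \subseteq S$ of size at most $2d$ that preserves the inferred set.

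First, suppose $S$ is entirely positive, so that $q(S)$ consists of the response $\{*\}$ for every $x \in S$. A rectangle $h' = [a'_1,b'_1] \times \cdots \times [a'_d,b'_d]$ is consistent with $q(S)$ precisely when it contains every point of $S$, i.e.\ when it contains the axis-aligned bounding box $B(S)$. Intersecting all such rectangles shows $I(q(S)) = B(S)$ (no negative inferences are possible since one can always enlarge $h'$). I would define $W$ by picking, for each coordinate $i \in [d]$, one point of $S$ minimizing the $i$-th coordinate and one maximizing it; this yields $|W| \leq 2d$ and $B(W) = B(S)$, so $I(q(S)|_W) = B(W) = B(S) = I(q(S))$.

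Next, suppose $S$ is entirely negative. Then $q(S)$ assigns to each $x \in S$ a single pair $(i_x, \sigma_x)$ with $\sigma_x \in \{0,1\}$. A rectangle $h'$ is consistent with $q(S)$ iff $(i_x, \sigma_x) \in \bigo_{\mathrm{odd}}(h', x)$ for every $x \in S$, i.e.\ iff $b'_{i_x} < x_{i_x}$ when $\sigma_x = 1$ and $a'_{i_x} > x_{i_x}$ when $\sigma_x = 0$. The conjunction of these constraints is equivalent to the per-coordinate constraints
\[
b'_i < c_i := \min\{x_i : (x,(i,1)) \in q(S)\}, \qquad a'_i > d_i := \max\{x_i : (x,(i,0)) \in q(S)\},
\]
whenever the respective sets are nonempty. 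Therefore I would let $W$ contain, for each coordinate $i$, one argmin achieving $c_i$ (if defined) and one argmax achieving $d_i$ (if defined), giving $|W| \leq 2d$, and $q(S)|_W$ induces exactly the same thresholds $c_i, d_i$.

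It then remains to verify that these thresholds fully determine $I(q(S))$. I would argue: no point can be inferred positive, since given any constraints above one can shrink the consistent rectangle arbitrarily close to any chosen interior point; and a point $y$ is inferred negative iff some threshold is violated, i.e.\ $y_i \geq c_i$ or $y_i \leq d_i$ for some $i$. The ``if'' direction uses that $b'_i < c_i \leq y_i$ forces $y \notin h'$, and similarly for the lower bound. The ``only if'' direction is shown by exhibiting a consistent rectangle containing $y$ when all inequalities $d_i < y_i < c_i$ hold strictly (take $a'_i, b'_i$ clustered tightly around $y_i$ while respecting the strict inequalities on $c_i$ and $d_i$, which is possible since $a'_i \le b'_i$ can be satisfied). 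Since the inferred set is a function only of $\{c_i, d_i\}$, and these are preserved by $W$, we conclude $I(q(S)|_W) = I(q(S))$. The main subtlety is this last coverage analysis in the negative case—ensuring that having only one violated coordinate recorded per point does not hide additional inferences—and handling the degenerate rectangles (open unbounded intervals) uniformly in the construction.
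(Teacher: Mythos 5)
Your proof is correct and follows essentially the same approach as the paper's: for positive samples you keep a coordinate-wise min and max, and for negative samples you keep, for each of the $2d$ possible response types $(i,\sigma)$, the single extreme point that tightens the constraint on the corresponding endpoint of the rectangle. Your threshold formulation ($c_i, d_i$) is a slightly more explicit way of stating the paper's observation that the discarded points' responses are inferred by the retained ones, and the coverage analysis at the end is a valid (if not strictly necessary) verification that the consistent-hypothesis set, and hence $I(\cdot)$, is preserved.
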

\begin{proof}
Recall that lossless sample compression separately examines the set of positively and negatively labeled points. We first analyze the positive samples---those which lie inside the axis-aligned rectangle. In this case, notice that we may simply select a subsample $T$ of size at most $2d$ which contains points with the maximum and minimal value at every coordinate. Since it is clear that $T$ infers all points inside the rectangle  $R(T)$ it spans, it is enough to argue that $S$ cannot infer any point outside $R(T)$. This follows from the fact that both $R(T)$ and the all $1$'s function are consistent with queries on $S$.

For the case of a negatively labeled sample, notice that if any two $x,x' \in X$ have the same ``odd-one-out'' response $(b,i)$, one must infer the response of the other. Informally, this is simply because the odd-one-out query measures whether some feature is too large or too small. If two points are too large in some coordinate, say, then the point with the smaller feature infers this response in the other point. More formally, assume without loss of generality that $x_i \leq x'_i$. Then if the query response is $(0,i)$, $q(S)|_{x'} \rightarrow q(S)|_{x}$. Likewise, if the response is $(1,i)$, then $q(S)|_{x} \rightarrow q(S)|_{x'}$. Thus for each of the $2d$ query types we need only a single point to recover all query information for the original sample. Since no information is lost, this compressed set clearly satisfies the conditions of the desired LCS.
\end{proof}
As an immediate corollary, we get that axis-aligned rectangles are actively RPU-learnable with near-optimal query complexity and bounded memory. Since the compression scheme and inference rule are efficiently computable, the algorithm is additionally computationally efficient.
\begin{corollary}\label{cor:rect}
The class of axis-aligned rectangles in $\mathbb{R}^d$ is RPU-learnable in only
\[
q(\varepsilon,\delta) = O\left (d\log \left (\frac{1}{\varepsilon\delta} \right ) \right)
\]
queries, $O(d)$ memory, and time $O\left(\frac{d^2\log(d/(\varepsilon\delta))\log(1/(\varepsilon\delta))}{\varepsilon}\right)$ when the learner has access to $\bigo_{\text{odd}}$.
\end{corollary}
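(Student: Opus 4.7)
The plan is to apply Theorem \ref{cor:bounded} (bounded-memory active RPU-learning via LCS) with the compression scheme furnished by Proposition \ref{prop:rect}. That proposition gives an LCS of size $k = 2d$ for axis-aligned rectangles in $\mathbb{R}^d$ with respect to $\bigo_{\text{odd}}$, so all that remains is to bound the query-complexity parameter $b$, query-time parameter $t$, compression-time $T_C$, and inference-time $T_{I_R}$ by polynomials in $d$ and feed the results into the theorem.

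First I would dispose of the query and compression parameters. Since $\bigo_{\text{odd}}$ is a single-point oracle, extracting a full query response on a sample of size $n$ costs $b(n) = n$ queries and $t(n) = O(dn)$ time (each query inspects the $d$ coordinates of a point against the target rectangle). The compression algorithm implicit in Proposition \ref{prop:rect} simply records, for the positive subsample, the coordinate-wise minimum and maximum ($O(dn)$ time), and for the negative subsample, one extremal representative per odd-one-out response type ($O(dn)$ time), so $T_C(n) = O(dn)$. Substituting the sample size $7k = 14d$ used in Theorem \ref{cor:bounded} yields $b(6k) = O(d)$ and $t(7k) = T_C(7k) = O(d^2)$.

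The inference parameter $T_{I_R}$ requires slight care: after compression, the stored set $C_i$ has at most $2d$ points, and a one-time $O(d^2)$ preprocessing per round suffices to tabulate (i) the current bounding box derived from positive samples and (ii) for each of the $2d$ possible odd-one-out response types, the tightest violated-coordinate threshold derived from negative samples. Given these precomputed structures, each fresh point can be tested for inference in $O(d)$ time, so $T_{I_R}(k) = O(d)$, with the per-round preprocessing absorbed into $T_C$. Finally I would substitute $k = 2d$, $b(6k) = O(d)$, $t(7k) = T_C(7k) = O(d^2)$, and $T_{I_R}(k) = O(d)$ into Theorem \ref{cor:bounded}. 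This yields the $O(d \log(1/(\varepsilon\delta)))$ query bound and the $O(d)$ memory bound (the query tape stores at most $O(d)$ points and the work tape their $O(d\log d)$ bits of odd-one-out responses) directly, and the time bound becomes
\[
O\!\left(\left(d^2 + \frac{d^2 \log(d/(\varepsilon\delta))}{\varepsilon}\right) \log(1/(\varepsilon\delta))\right) = O\!\left(\frac{d^2 \log(d/(\varepsilon\delta)) \log(1/(\varepsilon\delta))}{\varepsilon}\right),
\]
matching the statement. No serious obstacle is expected; the only bookkeeping subtlety is separating the one-time per-round preprocessing (folded into compression) from the per-sample inference check that drives the $1/\varepsilon$ factor.
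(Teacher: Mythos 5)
Your proposal is correct and takes essentially the same approach as the paper: instantiate \Cref{cor:bounded} with the $2d$-size LCS from \Cref{prop:rect}, bound $b(n)=O(n)$ and $T_C(n)=O(dn)$, and observe that with the stored bounding box and per-response-type extremal thresholds precomputed once per round, each inference check is $O(d)$. The paper compresses your per-round preprocessing remark into the parenthetical ``we may act as though $T_I(d)\leq O(d)$,'' but the substance is identical.
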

\begin{proof}
The first two statements follow immediately from \Cref{cor:bounded} and noting that $b(n) \leq O(n)$. The runtime guarantee is slightly more subtle, and follows from noting that compression time $T_C(n) \leq O(dn)$, and that with a suitable representation of the compressed set, inference for a point $x\in\mathbb{R}^d$ only requires checking the value of each coordinate (in essence, we may act as though $T_{I}(d) \leq O(d)$). 
\end{proof}
Fixing parameters other than $\varepsilon$, we note that the optimal query complexity for axis-aligned rectangles (and all following examples) is $\Omega(\log(1/\varepsilon))$. This follows from standard arguments \cite{kulkarni1993active}, and can be seen simply by noting that there exists a distribution with at least $\Omega(1/\varepsilon)$ $\varepsilon$-pairwise-separated concepts---the bound comes from noting that each query only provides $O(1)$ bits of information. 

\subsection{Decision Trees}
While axis-aligned rectangles provide a fundamental example of a classifier for which enriched queries break the standard barriers of active RPU-learning and bounded memory, they are too simple a class in practice to model many situations of interest. In this section we consider a broad generalization of axis-aligned rectangles that is not only studied extensively in the learning literature \cite{ehrenfeucht1989learning,kushilevitz1993learning,o2007learning,kalai2008decision,blanc2020universal}, but also used frequently in practice \cite{quinlan1986induction,umanol1994fuzzy,hssina2014comparative,singh2014comparative}: decision trees. In this setting we consider a natural enriched query which roughly falls into a paradigm known as \textit{same-cluster} queries \cite{ashtiani2016clustering}, which determine whether a given set of points lie in a ``cluster'' of some sort. Variants of this query have seen substantial study in the past few years in both clustering and learning \cite{verroios2017waldo,mazumdar2017clustering,ailon2018approximate,firmani2018robust,saha2019correlation} after their recent formal introduction by Ashtiani, Kushagra, and Ben-David \cite{ashtiani2016clustering}.


More formally, recall that a decision tree over $\mathbb{R}^d$ is a binary tree where each node in the tree corresponds to an inequality:
\[
x_i \overset{?}{\geq} b \ \text{or} \ x_i \overset{?}{\leq} b,
\]
measuring the $i$-th feature (coordinate) of any $x \in \mathbb{R}^d$. Each leaf in the tree is assigned a label, and the label of any $x \in X$ is uniquely determined by the leaf resulting from following the decision tree from root to leaf, always taking the path determined by the inequality at each node. We measure the size of a decision tree by counting its leaves. We introduce a strong enriched query for decision trees in the same-cluster paradigm called the \textit{same-leaf} oracle $\bigo_{\text{leaf}}$. In particular, given a decision tree $h$ and two points $x,x'$, the same-leaf query on $x$ and $x'$ simply determines whether $x$ and $x'$ lie in the same leaf of $h$. It is worth noting that the same-leaf oracle can be seen as a strengthening of Dasgupta, Dey, Roberts, and Sabato's \cite{dasgupta2018learning} ``discriminative features,'' a method of dividing a decision tree into clusters, each of which should have some discriminating feature.  Same-leaf queries take this idea to its logical extreme where each leaf should be thought of as a separate cluster. 

We will show that same-leaf queries have a small and efficiently computable lossless compression scheme with respect to an efficient inference rule $R_{\text{rect}}$, and therefore have query-optimal and computationally efficient bounded-memory RPU-learners. The inference rule $R_{\text{rect}}$ is a simple restriction where $x \in I_{R_{\text{rect}}}(q(S))$ if and only if $x$ lies inside a rectangle spanned by a mono-leaf subset $T \subseteq S$, that is a subset $T$ for which $\forall x,x' \in T, \bigo_{\text{leaf}}(h,x,x')=1$. In other words, we infer information independently for each leaf.
\begin{proposition}\label{prop:same-leaf}
The class of size $s$ decision trees over $\mathbb{R}^d$ has an LCS of size at most $2ds$ with respect to $\bigo_{\text{leaf}}$ and $R_{\text{rect}}$.
\end{proposition}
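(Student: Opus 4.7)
The plan is to reduce the decision-tree case to $s$ independent instances of the axis-aligned rectangle compression established in Proposition~\ref{prop:rect}. Let $h$ be a size-$s$ decision tree and $S \subset \mathbb{R}^d$ a monochromatic sample. First, query $\bigo_{\text{leaf}}$ on every pair of points in $S$. Since the actual ``same-leaf'' relation on $\mathbb{R}^d$ induced by $h$ is an equivalence relation, the responses partition $S$ into equivalence classes $E_1, \ldots, E_m$, one per leaf of $h$ touching $S$; because $h$ has at most $s$ leaves, $m \leq s$.

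Within each class $E_j$, all points lie in a single (axis-aligned) leaf region, so apply the rectangle compression from Proposition~\ref{prop:rect}: select up to $2d$ points $W_j \subseteq E_j$ attaining the minimum and maximum value in each coordinate, so that the bounding box of $W_j$ coincides with the bounding box of $E_j$. Take $W = \bigcup_{j=1}^m W_j$, which satisfies $|W| \leq 2ds$. Note that $W$ is easy to compute: just partition via a union-find over the same-leaf responses and track per-class coordinate extrema.

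It remains to verify $I_{R_{\text{rect}}}(q(S)) = I_{R_{\text{rect}}}(q(W))$. The inclusion $\supseteq$ is immediate from $W \subseteq S$. For the other direction, suppose $x \in I_{R_{\text{rect}}}(q(S))$, so there is a mono-leaf subset $T \subseteq S$ whose axis-aligned bounding box contains $x$. By transitivity of ``same leaf,'' $T$ sits inside a single equivalence class $E_j$; hence the bounding box of $T$ is contained in that of $E_j$, which in turn equals the bounding box of $W_j \subseteq W$. As $W_j \subseteq E_j$ is itself mono-leaf, $W_j$ is a valid witness that $x \in I_{R_{\text{rect}}}(q(W))$. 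The only real subtlety is confirming that the inference rule $R_{\text{rect}}$ interacts cleanly with the compression within each leaf---which follows from the two facts above---and that ``same-leaf'' responses on $W$ are indeed consistent with those on $S$ (automatic, since $W \subseteq S$ and query responses descend to subsets). Nothing else in the argument goes beyond bookkeeping, so I do not anticipate a genuine obstacle.
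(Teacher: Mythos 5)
Your proposal is correct and follows essentially the same route as the paper: partition $S$ into leaf-equivalence classes via the same-leaf responses, keep the $\leq 2d$ coordinate extrema per class (so $\leq 2ds$ points total), and observe that under $R_{\text{rect}}$ the inferred set is exactly the union of the per-leaf bounding boxes, which is preserved. You spell out the two inclusions more explicitly than the paper does, but the underlying argument is identical.
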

\begin{proof}
Notice that any leaf of a decision tree corresponds to an axis-aligned rectangle in $\mathbb{R}^d$. Further, for any $S \subset \mathbb{R}^d$, $q(S)$ allows us to partition $S$ into subsamples sharing the same leaf. Fix one such subsample and denote it by $S_L$. By the same argument as \Cref{prop:rect}, there exists $T \subseteq S_L$ of size at most $2d$ such that $S_L$ lies entirely within the rectangle spanned by $T$. Since these are exactly the points inferred by $S$ in that leaf under inference rule $R_{\text{rect}}$, repeating this process over all $s$ leaves gives the desired LCS.
\end{proof}
\begin{corollary}\label{cor:DT-fixed}
The class of size $s$ decision trees is RPU-learnable in only
\[
q(\varepsilon,\delta) = O\left (ds^2\log \left (\frac{1}{\varepsilon\delta} \right ) \right)
\]
queries, $O(ds)$ memory, and time $O\left(\frac{d^2s^2\log(ds/(\varepsilon\delta))\log(1/(\varepsilon\delta))}{\varepsilon}\right)$ when the learner has access to $\bigo_{\text{leaf}}$.
\end{corollary}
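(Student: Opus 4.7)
The plan is to apply Theorem \ref{cor:bounded} directly to the lossless compression scheme of size $k=2ds$ furnished by Proposition \ref{prop:same-leaf}. It then suffices to bound the four class-dependent quantities appearing in that theorem: the query-count $b(7k)$, the query time $t(7k)$, the compression time $T_C(7k)$, the inference time $T_{I_{R_{\text{rect}}}}(k)$, and the bit-size $B(7k)$. Once these are bounded appropriately, the three complexity estimates in the corollary drop out by plugging in $k = O(ds)$ and simplifying.

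The first observation, which is needed to get the claimed $ds^2$ (and not $d^2s^2$) dependence in the query complexity, is that same-leaf queries on $n$ points can be fully resolved using only $O(ns)$ pairwise queries rather than the naive $\binom{n}{2}$. The idea is a union-find style procedure: process the points one at a time, and for each new point compare it against a single representative from each of the at most $s$ existing leaf-classes; if it matches one, merge it in, otherwise open a new class. Combined with $O(s)$ label queries (one per discovered class), this gives $b(n), t(n) = O(ns)$, so that $b(7k) = O(ds^2)$ and the query complexity becomes $q(\varepsilon,\delta) = O(ds^2 \log(1/(\varepsilon\delta)))$.

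The compression time $T_C(7k)$ and the inference time $T_{I_{R_{\text{rect}}}}(k)$ are both easy once one recalls the structure of the scheme in Proposition \ref{prop:same-leaf}: the compressed set is obtained by first running the above partition procedure on $S$ and then, inside each of the at most $s$ mono-leaf classes, retaining the $2d$ coordinate-wise extremal points, so $T_C(7k) = O(ds^2) + O(d \cdot 7k) = O(d^2s^2)$. For inference, the compressed set is naturally represented as a collection of at most $s$ axis-aligned rectangles together with their shared label, so checking whether a point $x$ lies in one of them takes $O(ds)$ time, giving $T_{I_{R_{\text{rect}}}}(k) = O(ds)$. Finally, the response to all queries on a set of size $n$ can be stored as the class-index of each point plus $s$ labels, which fits in the claimed $O(ds)$ memory (up to a $\log s$ factor that is absorbed into the stated bound by representing the rectangles directly rather than the raw partition).

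Plugging these bounds into Theorem \ref{cor:bounded} yields exactly the three advertised complexities. The only step that requires any genuine thought is the union-find argument for $b(n) = O(ns)$ rather than $O(n^2)$; everything else is a routine substitution. In particular, there is no new probabilistic analysis needed here, since the coverage and reliability guarantees of Theorem \ref{cor:bounded} are inherited wholesale from the LCS, and the restriction to the inference rule $R_{\text{rect}}$ is already baked into the statement of Proposition \ref{prop:same-leaf}.
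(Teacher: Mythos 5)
Your proposal is correct and follows essentially the same route as the paper: invoke Theorem~\ref{cor:bounded} with the size-$2ds$ LCS from Proposition~\ref{prop:same-leaf}, bound $b(n), t(n) = O(ns)$ by bucketing points into at most $s$ leaf-classes with one representative each (what you call the union-find step and the paper calls ``grouping points into $s$ buckets''), and plug in $k = O(ds)$ together with quadratic compression time. One small observation: your accounting $T_{I_{R_{\text{rect}}}}(k) = O(ds)$ for checking a point against all $s$ candidate rectangles is in fact what reproduces the stated $d^2 s^2$ factor in the running time, whereas the paper's own sketch folds the rectangle-building into per-round pre-processing and quotes $O(d)$ per check; your version is the more conservative reading and is the one that matches the advertised bound exactly.
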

\begin{proof}
The proof follows similarly to \Cref{cor:rect}. It is not hard to see that $T_C(n)$ is at worst quadratic. Determining queries on a set of size $n$ reduces to grouping points into $s$ buckets (each standing for some leaf), which can be done in $b(n) \leq O(ns)$ queries. While $T_{I_{R_{\text{rect}}}}(n)$ would technically require finding the rectangles implied by the given sample and checking $x$ against each one, the former can be thought of pre-processing since it need only be done once per round. With this process removed, checking $x$ takes only $O(d)$ time, which gives the desired result.
\end{proof}

While it is not possible to learn decision trees of arbitrary size in the standard learning models \cite{hancock1996lower}, one might hope that \Cref{cor:DT-fixed} can be used to build a learner for this class with nice guarantees. Indeed this is possible if we weaken our memory and computational complexity measures to be expected rather than worst-case. We will show that in this regime, \Cref{cor:DT-fixed} can be bootstrapped to build an \textit{attribute-efficient} algorithm for RPU-learning decision trees, that is one in which the expected memory, query, and computational complexity scale with the (unknown) size of the underlying decision tree.

\begin{corollary}\label{cor:DT-arbitrary}
There exists an algorithm for RPU-learning decision trees over $\mathbb{R}^d$ which in expectation:
\begin{enumerate}
    \item Makes $O\left (ds^2\log \left (\frac{s}{\varepsilon\delta} \right ) \right)$ queries
    \item Runs in time $\text{poly}(s,d,\varepsilon^{-1},\log(\delta^{-1}))$
    \item Uses $O(ds)$ memory,
\end{enumerate}
where $s$ is the size of the underlying decision tree.
\end{corollary}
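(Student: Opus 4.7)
The plan is to apply a doubling-and-abort scheme on the unknown size $s$. For $t = 1, 2, \ldots$, run the bounded-memory RPU procedure of \Cref{cor:bounded} specialized to decision trees as in \Cref{cor:DT-fixed}, with guessed size $s_t' := 2^t$, confidence $\delta_t := \delta/2^{t+1}$, and one added safeguard: at any round, if executing the compression of \Cref{prop:same-leaf} on $S_i \cup C_{i-1}$ yields more than $k := 2d s_t'$ points (equivalently, more than $s_t'$ distinct leaves appear in the working set), halt this iteration, reset all memory, and advance to $t+1$. If instead the iteration completes its $T_1 = O(\log(1/(\varepsilon\delta_t)))$ rounds without aborting, output its hypothesis and terminate.

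\textbf{Reliability and coverage.} RPU-reliability is immediate and independent of $s_t'$: the rule $R_{\text{rect}}$ only labels points lying in the axis-aligned bounding box of a same-leaf-certified subsample, which is always contained in a true leaf. For coverage, the central probabilistic bound driving \Cref{passive-perfect} and \Cref{cor:AL}---that for any sample of size $6k$ the probability of \emph{any} size-$\leq k$ subset satisfying condition 1 while having coverage below $1/2$ is at most $\binom{6k}{k}(1/2)^{5k} < 1/2$---is a statement about the sample alone and does not assume $s_t' \geq s$. Since the abort triggers exactly when no such size-$\leq k$ witness exists (our compression is the canonical PCS witness on each round), the same Chernoff-style argument as in \Cref{cor:AL} yields $\Pr[\text{iteration completes with coverage} < 1-\varepsilon] \leq \delta_t$. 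A union bound then gives total failure probability $\sum_t \delta_t \leq \delta$.

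\textbf{Expected complexity.} Once $s_t' \geq s$, \Cref{prop:same-leaf} guarantees every compression produces at most $2ds \leq k$ points, so the abort is deterministically never triggered and the iteration completes (successfully with probability $\geq 1-\delta_t$). Hence the procedure terminates by iteration $t^* := \lceil \log_2 s \rceil + 1$. Each iteration $t$ costs polynomial in $s_t', d, \varepsilon^{-1}, \log(1/\delta_t)$ by \Cref{cor:DT-fixed}, and since $s_t'$ doubles, geometric summation is dominated by the last iteration: the expected query complexity is $O(ds^2\log(s/(\varepsilon\delta)))$, expected time is $\text{poly}(s, d, \varepsilon^{-1}, \log(1/\delta))$, and expected memory is $O(ds)$ (memory resets between iterations, and iteration $t^*$ stores at most $O(d s_{t^*}') = O(ds)$ points).

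\textbf{Main obstacle.} The subtlety is preventing silent failure: a wrong guess $s_t' < s$ must not yield an accepted hypothesis with poor coverage. The fix hinges on the (not entirely obvious) observation that the coverage bound of \Cref{cor:AL} controls \emph{precisely} the event ``some size-$\leq k$ PCS witness exists and it has bad coverage,'' which is exactly ``no abort and bad coverage.'' This alignment between the abort condition and the coverage analysis is what allows the doubling scheme to remain RPU-compliant without an auxiliary verification step to re-check usefulness.
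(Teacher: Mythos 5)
Your proof is correct but takes a genuinely different route from the paper's. The paper treats \Cref{cor:DT-fixed} entirely as a black box: for each guess $s'$ it runs the learner with appropriately shrunk parameters, then performs an \emph{auxiliary empirical test}---drawing $M_{s'} = O(\log(s'/\delta)/\varepsilon)$ fresh points and checking whether any are un-inferred---to decide whether to accept or double $s'$. The union bound over guesses then controls the probability of ever accepting a bad hypothesis, and the expected-complexity analysis handles the (positive-probability) event that the guess keeps doubling past $16s$. Your construction instead reaches \emph{inside} the bounded-memory loop of \Cref{cor:bounded} and installs a structural abort tied to the compression step, then observes that the double-sampling estimate of \Cref{passive-perfect} already bounds the event ``a size-$\le k$ witness exists and has bad coverage,'' which upper-bounds ``no abort and bad coverage''---so no separate usefulness check is needed. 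This also makes termination by iteration $\lceil\log_2 s\rceil + 1$ \emph{deterministic} rather than in expectation, which is a slightly stronger statement than the paper claims. What the paper's approach buys is modularity (it never reopens the proof of \Cref{cor:bounded}); what yours buys is avoiding the verification samples and giving worst-case rather than expected memory.

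Two small imprecisions, neither fatal. First, ``compression yields $>2ds_t'$ points'' is not literally equivalent to ``$>s_t'$ distinct leaves appear,'' since a leaf may contribute fewer than $2d$ extreme points; only the implication you actually use (``no abort $\Rightarrow$ a size-$\le k$ PCS witness exists for the current working set'') holds, and similarly the abort does not trigger ``exactly'' when no size-$\le k$ witness exists---it triggers when the \emph{canonical} witness exceeds $k$. Second, invoking ``the same Chernoff-style argument as in \Cref{cor:AL}'' glosses over the fact that you must now count ``round passes and is bad'' events (each with conditional probability $\le 1/2$) rather than ``round is good'' events; the martingale/Azuma version goes through, and this is at the same level of rigor as the paper's own phrasing in \Cref{cor:AL}, but it is worth flagging since your entire novelty is that you are running the coverage analysis under the conditioning the abort induces.
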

\begin{proof}
For simplicity, we use \Cref{cor:DT-fixed} as a blackbox for an increasing ``guess'' $s'$ on the size of the decision tree. After each application of \Cref{cor:DT-fixed}, we test its coverage empirically. If too many points go un-inferred, we double our guess for $s'$ and continue the process. More formally, we start by setting our guess $s'$ to $2$. The learner then applies \Cref{cor:DT-fixed} with accuracy parameters $\delta'=(\delta/s')$ and $\varepsilon'=O(\varepsilon/(s'\log(s'/\delta)))$. For ease of analysis, we assume that each application of \Cref{cor:DT-fixed} uses some fixed number of samples $N_{s'}$ (given by its corresponding sample complexity). This can be done by arbitrarily inflating the number of samples drawn if the query cutoff is reached before the sample cutoff (and likewise for the query counter if the sample cutoff is reached first). Fixing this quantity allows the algorithm to ``know'' what step it is in by checking the sample and query counters.\footnote{More formally, this just means there is a well-defined transition function for the algorithm which relies on the counters to move between potential tree sizes.} 

After each application of \Cref{cor:DT-fixed}, we draw $M_{s'} = O(\log(s'/\delta)/\varepsilon)$ points. If a single point in this sample is un-inferred by the output of \Cref{cor:DT-fixed}, we double $s'$ and continue the process. Otherwise, we output the classifier given by \Cref{cor:DT-fixed} in that round. Notice that tracking the existence of an un-inferred point in this sample takes only a single bit, and moreover that since the sample size of each round is fixed to be $N_{s'} + M_{s'}$, the algorithm can still use the counters to track its position at any step. Finally, we note that for a given $s'$, if the algorithm does not have coverage at least $1-\varepsilon$ it aborts with probability at most $\frac{\delta}{s'}$ by our choice of $M_{s'}$. Since $s'$ starts at $2$ and doubles each round, a union bound gives that the probability the algorithm ever aborts with coverage worse than $1-\varepsilon$ is at most $\delta$ as desired.

It is left to compute the expected memory usage, query complexity, and computational complexity of our algorithm. Notice that at any given step with guess $s'$, our algorithm runs in time
\[
O\left(\frac{d^2s'^2\log(ds'/(\varepsilon\delta))\log(s'/(\varepsilon\delta))}{\varepsilon}\right),
\] 
uses $O(ds')$ memory, and makes at most $O(ds'^2\log(s'/\varepsilon\delta))$ queries. Further, for every round in which $s'\geq s$, the samples drawn are sufficiently large that \Cref{cor:DT-fixed} is guaranteed to succeed with probability at least $1-O(1/s')$ by our chosen parameters. Since the coverage test also succeeds with probability at least $1-O(1/s')$ and each round is independent, for any $s'\geq 16s$ the failure probability at that step is at most $O(1/s'^4)$ (as the algorithm has at this point run at least four rounds with $s' \geq s$). Since all our complexity measures scale like $o(s'^3)$, the expected contribution to time, query complexity, and memory usage at least half in each step for which $s' > 16s$. It is not hard to observe that these final steps then add no additional asymptotic complexity, which gives the desired result.
\end{proof}
\subsection{Halfspaces in Two Dimensions}
Much of the prior work on active learning with enriched queries centers around the class of halfspaces \cite{kane2017active,xu2017noise,hopkins2020noise,hopkins2020power,Cui2020uncertainty}. While it has long been known the class cannot be efficiently active learned in the standard model \cite{dasgupta2005analysis}, KLMZ \cite{kane2017active} showed that adding a natural enriched query called a \textit{comparison} resolves this issue. In particular, recall that a 2D-halfspace is given by the sign of the inner product with some normal vector $v \in S^1$ plus some bias $b \in \mathbb{R}$. A comparison query on two points $x,x' \in \mathbb{R}^2$ measures which point is further from the hyperplane defined by $h$, that is:
\[
\langle x, v \rangle \overset{?}{\geq} \langle x', v \rangle.
\]
KLMZ proved that halfspaces in two dimensions have finite inference dimension, a combinatorial parameter slightly weaker than perfect sample compression that characterizes query-optimal active learning. In fact, we show that 2D-halfspaces with comparisons have substantially stronger structure---namely an LCS of size 5.
\begin{proposition}
The class of halfspaces over $\mathbb{R}^2$ has a size $5$ lossless compression scheme with respect to the comparison oracle.
\end{proposition}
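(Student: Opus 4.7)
I would reduce to monochromatic subsamples $S$, handling the positive case (the negative case is symmetric by reflection). Let $x_\star \in S$ be a point achieving the observed minimum of $\langle v, \cdot \rangle$ under the true halfspace $(v, b)$. First, I would characterize the inferred labels. Since the bias of consistent halfspaces is unbounded above, no point is inferred negative from a purely positive sample. For positive inferences, define the convex cone $V_S \subseteq \mathbb{R}^2$ of all $v'$ satisfying the pairwise comparison constraints $\langle v', x_i - x_j \rangle \geq 0$ observed in $q(S)$. For each $v' \in V_S$, the weakest consistent bias is $b' = -\langle v', x_\star \rangle$, so a direct calculation shows that $y$ is inferred positive iff $\langle v', y - x_\star \rangle \geq 0$ for every $v' \in V_S$; equivalently, $y - x_\star$ lies in the polar cone $V_S^\circ$. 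Hence the inferred positive region is exactly $x_\star + V_S^\circ$.

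The key structural observation is that $V_S$, being a convex cone in $\mathbb{R}^2$, has at most two extreme rays. Each extreme ray is witnessed by a tight constraint $\langle v'^*, x_i - x_j \rangle = 0$ coming from some observed pair $(x_i, x_j) \in S \times S$. I would define $W \subseteq S$ as $\{x_\star\}$ together with the (at most four) witnesses of the two tight constraints, giving $|W| \leq 5$.

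To finish, I would verify that $I(q(W)) = I(q(S))$. The minimum of $W$ under $v$ is still $x_\star$, since $x_\star$ was the minimum of $S \supseteq W$. For the cones, $V_W \supseteq V_S$ is immediate because $W \subseteq S$ imposes fewer constraints; conversely, $V_W$ includes the two tight pairwise constraints that define the boundary of $V_S$, so $V_W$ lies inside the cone they define, which is precisely $V_S$. Hence $V_W = V_S$, the polar cones agree, and the inferred positive regions $x_\star + V_W^\circ$ and $x_\star + V_S^\circ$ coincide.

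The main obstacle is the first step: correctly translating the label-inference definition (quantifying over all consistent $(v', b')$) into the clean polar-cone condition via the weakest-bias argument, and checking that no negative label can ever be inferred. Once that is established, the 2-dimensional structure essentially forces the compression, since any convex cone in $\mathbb{R}^2$ is pinned by at most two extreme rays, each realized by a specific pair in $S$. Degenerate cases (only one extreme ray, tight constraints sharing a point, or a tight constraint that already involves $x_\star$) only reduce $|W|$ below $5$, so the bound extends uniformly, and applying the same reasoning symmetrically to monochromatic negative samples completes the construction of a size-$5$ LCS.
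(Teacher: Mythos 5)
Your proof is correct and follows essentially the same strategy as the paper: both construct the compressed set from a minimal point of the monochromatic sample together with at most two extremal ``increasing'' pairs, exploiting the fact that a convex cone in $\mathbb{R}^2$ is determined by at most two of its defining halfplane constraints. Your dual-cone framing, which characterizes the inferred region as $x_\star + V_S^\circ$ via the weakest-bias argument, is a cleaner packaging of the verification the paper performs geometrically by constructing the bounding halfspaces $h_1,h_2$ of the primal cone and checking their consistency directly.
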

\begin{proof}
Given some (unknown) hyperplane $h=\langle v, \cdot \rangle + b$, monochromatic set $S$, and labels and comparisons $q_h(S)$, we must prove the existence of a subset $T$ of size at most $5$ such that $I(q_h(T)) = I(q_h(S))$. The idea behind our construction is to consider positive rays based on $S$ and $q_h(S)$. In other words, notice that for any point $s \in S$ and pair $(x_1,y_1)$ such that $h(x_1) \geq h(y_1)$, the ray $r_1 = s + t(x_1-y_1)$ is \emph{increasing} with respect to $h$. Furthermore, any two such rays with the same base point $r_1 = s + t(x_1-y_1)$ and $r_2 = s + t(x_2-y_2)$ form a \emph{cone}
\[
C(r_1,r_2) = \left\{ y \in \mathbb{R}^2:~\exists \alpha_1,\alpha_2>0~\text{s.t.}~y = s + \sum\limits_{i=1}^2 \alpha_i(x_i-y_i) \right\}
\]
such that for any $y \in C(r_1,r_2)$, labels and comparisons on the set $\{s,x_1,x_2,y_1,y_2\}$ infer that $h(y) \geq 0$:
\begin{align*}
h(y) &= \left\langle v, s + \sum\limits_{i=1}^2 \alpha_i(x_i-y_i) \right\rangle + b\\
&= \left (\left\langle v, s \right\rangle + b\right) + \sum\limits_{i=1}^2\left\langle v, \alpha_i(x_i-y_i) \right\rangle\\
&= h(s) + \alpha_1(h(x_1)-h(y_1)) + \alpha_2(h(x_2)-h(y_2))\\
&\geq 0.
\end{align*}
Notice that the union of all such cones is itself a cone where the base point $s$ is some minimal element in $S$ (in the sense that for all $s' \in S, h(s) \leq h(s')$), and $r_1$ and $r_2$ are rays stemming from $s$ that make the greatest angle. We argue that $y \in I(q_h(S))$ if and only if $y$ lies inside this cone. Since the cone may be defined by queries on the 5 points making up $r_1$ and $r_2$, this gives the desired compression scheme. See \Cref{fig:halfspace} for a pictorial description of this process.

Denote the cone given by this process by $C$. We have already proved that $y \in I(q_h(S))$ if $y \in C$, so it is sufficient to show that any $y \notin C$ cannot be inferred by queries on $S$. To see this, we examine the bounding hyperplanes of $C$: $s + t(x_1-y_1)$, and $s + t(x_2-y_2)$. Let $H_1$ and $H_2$ denote the corresponding halfspaces (defined such that $C=H_1 \cap H_2$), and define $h_i = \langle v_i, \cdot \rangle + b_i$ such that $H_i=\text{sign}(h_i)$. We first note that the labels and comparisons on $S$ corresponding to each $h_i$ are consistent with those on the true hypothesis $h$. This is obvious for label queries since $S$ is assumed to be entirely positive, and $C$ contains $S$ by definition. For comparisons, assume for the sake of contradiction that there exists a query inconsistent with some $h_i$, that is a pair $x,y \in S$ such that $h(x) \geq h(y)$ but $h_i(x) < h_i(y)$. If this is the case, notice that the ray extending from $y$ through $x$ is decreasing with respect $h_i$, and therefore must eventually cross it. It follows that replacing $(x_i,y_i)$ in the construction of $T$ with $(x,y)$ results in a wider angle, which gives the desired contradiction.

Given that $h_1$ and $h_2$ are both consistent with queries on $S$ under the true hypothesis, consider a point $y$ lying outside of $C$. By definition, either $\text{sign}(h_1(y))$ or $\text{sign}(h_2(y))$ is negative. However, notice that since $h_i = \langle v_i, \cdot \rangle + b_i$ is consistent with queries on $S$, it must also be the case that any non-negative shift $h_{i,b'} = \langle v_i, \cdot \rangle + b_i + b'$ for $b' \ge 0$ is consistent as well. Since for sufficiently large $b'$, $\text{sign}(h_{i,b'}(y))$ is positive, the true label $\text{sign}(h(y))$ cannot be inferred as desired.
\end{proof}
  \begin{figure}
\captionsetup[subfigure]{justification=centering}
    \centering
      \begin{subfigure}{0.32\textwidth}
        \includegraphics[width=\textwidth]{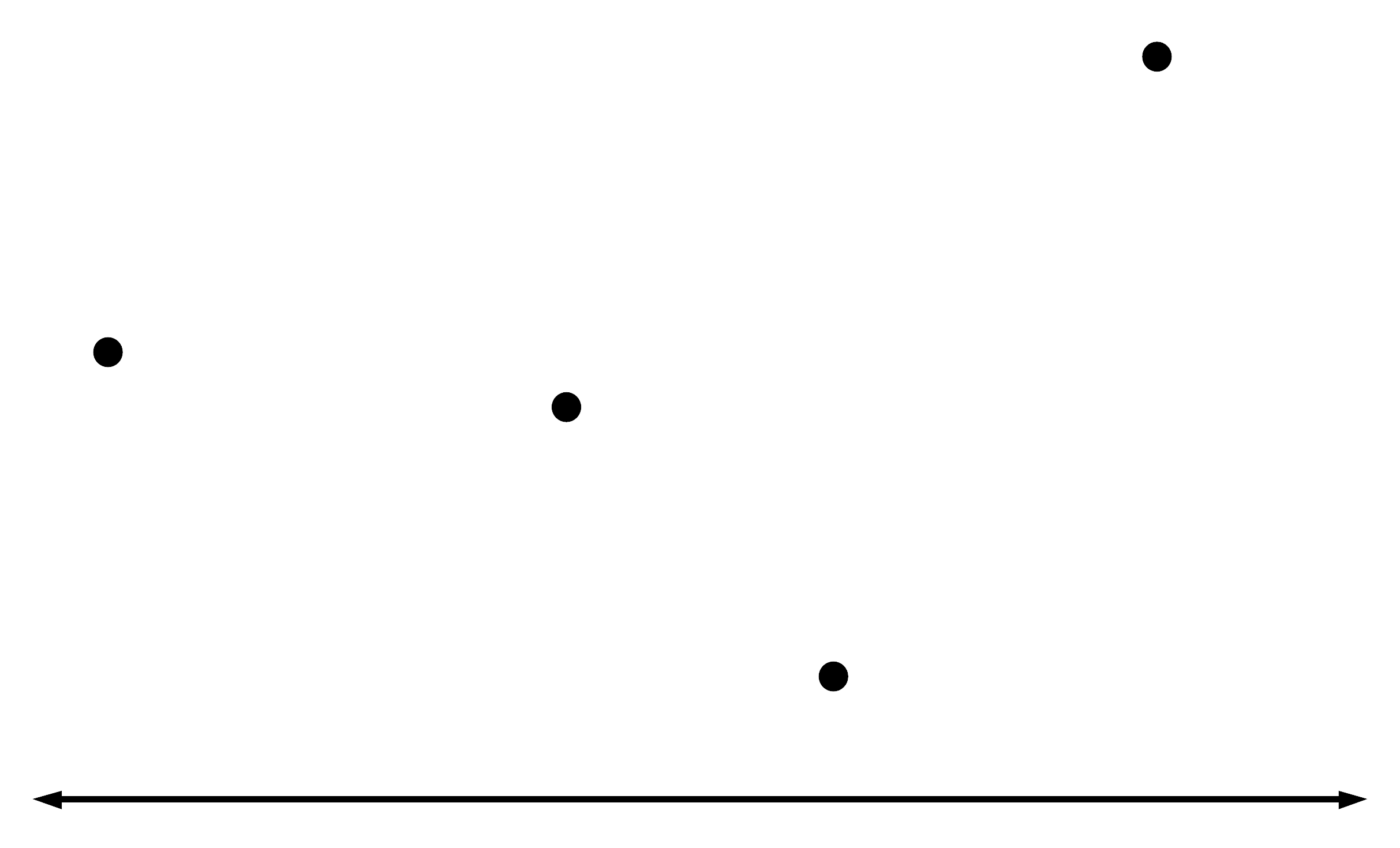}
          \caption{Original sample}
          \label{fig:NiceImage1}
      \end{subfigure}
      \begin{subfigure}{0.32\textwidth}
        \includegraphics[width=\textwidth]{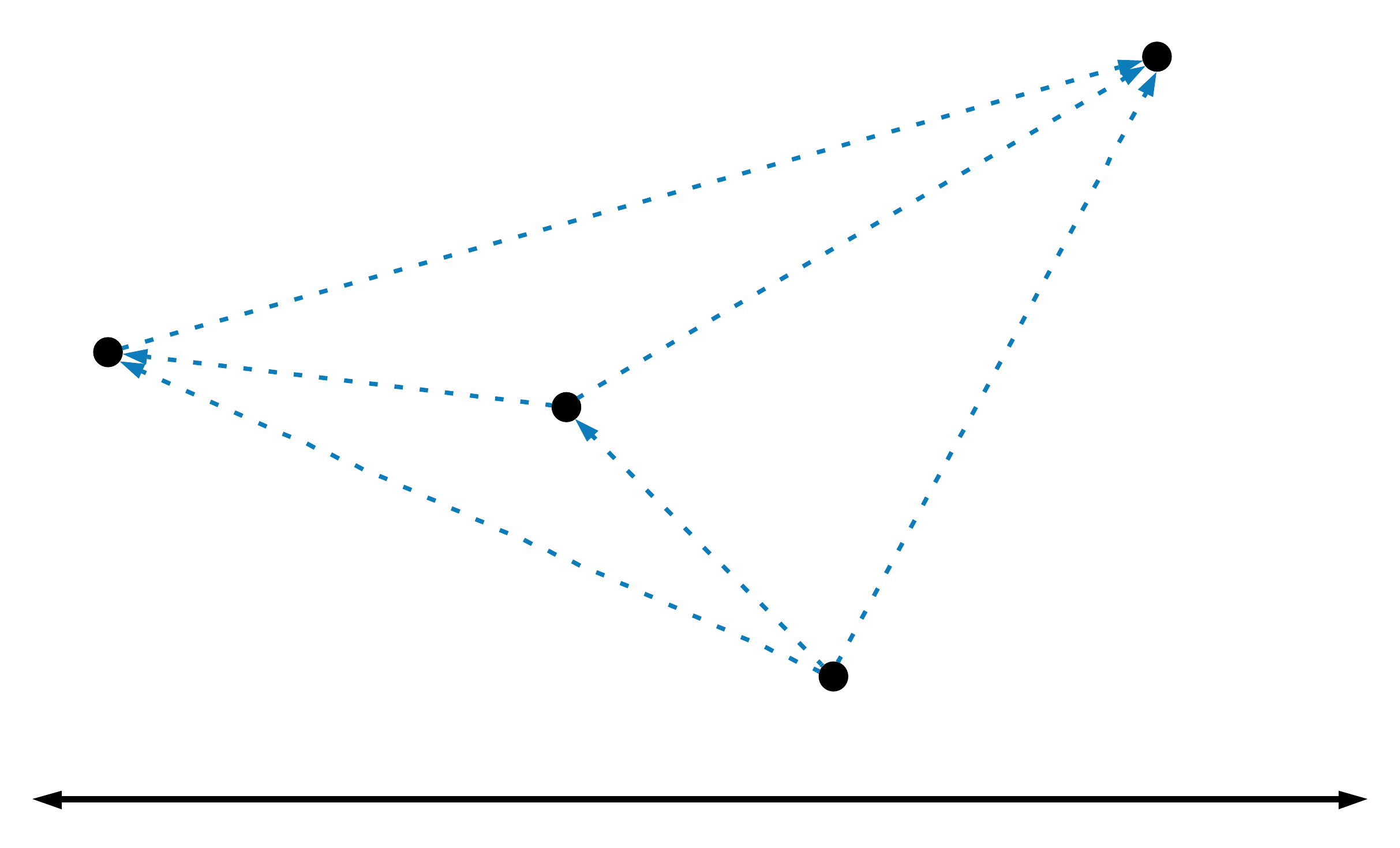}
          \caption{Find increasing directions (blue)}
          \label{fig:NiceImage2}
      \end{subfigure}
      \begin{subfigure}{0.32\textwidth}
        \includegraphics[width=\textwidth]{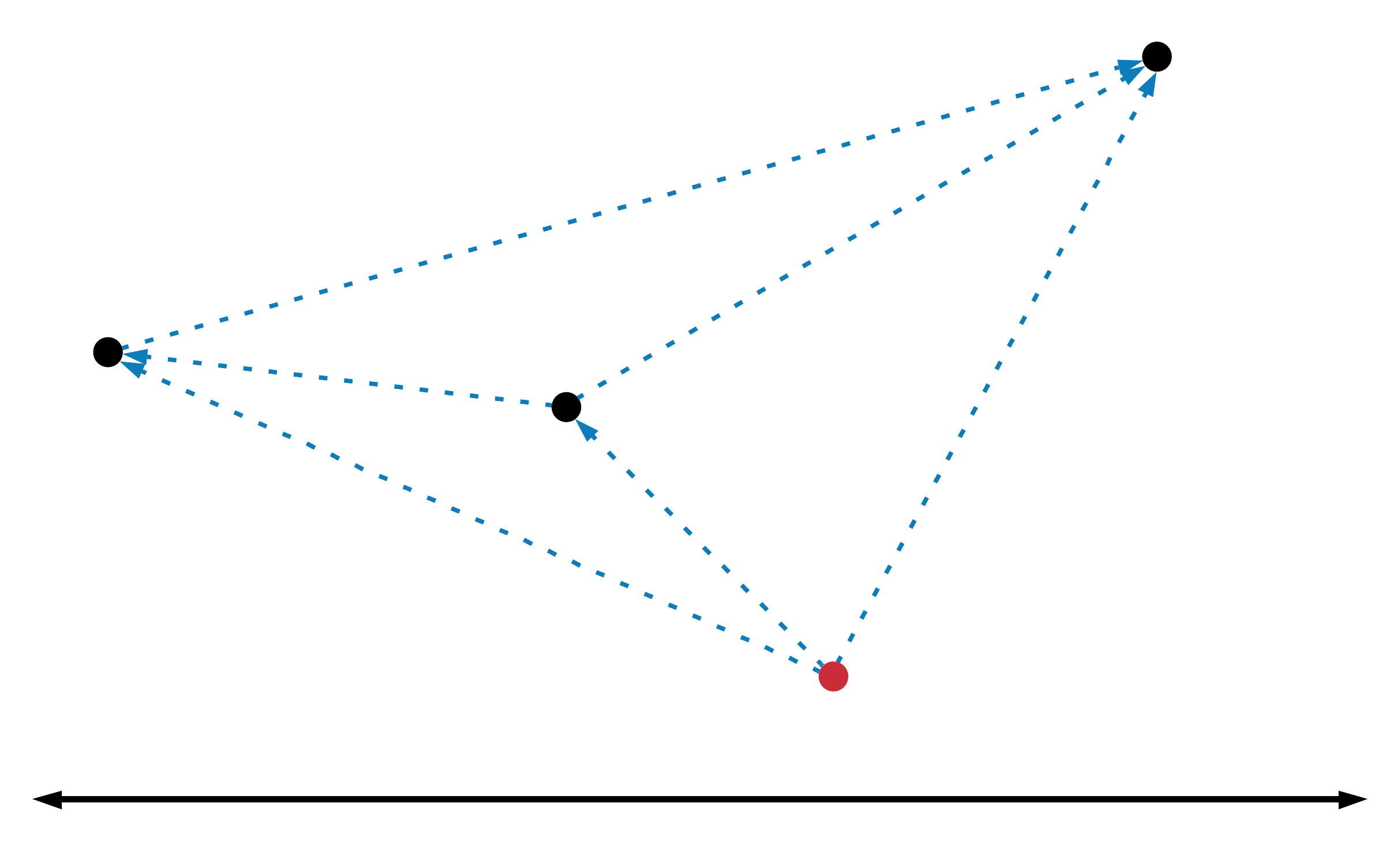}
          \caption{Find minimal point (red)}
      \end{subfigure}
      \\
        \begin{subfigure}{0.32\textwidth}
        \includegraphics[width=\textwidth]{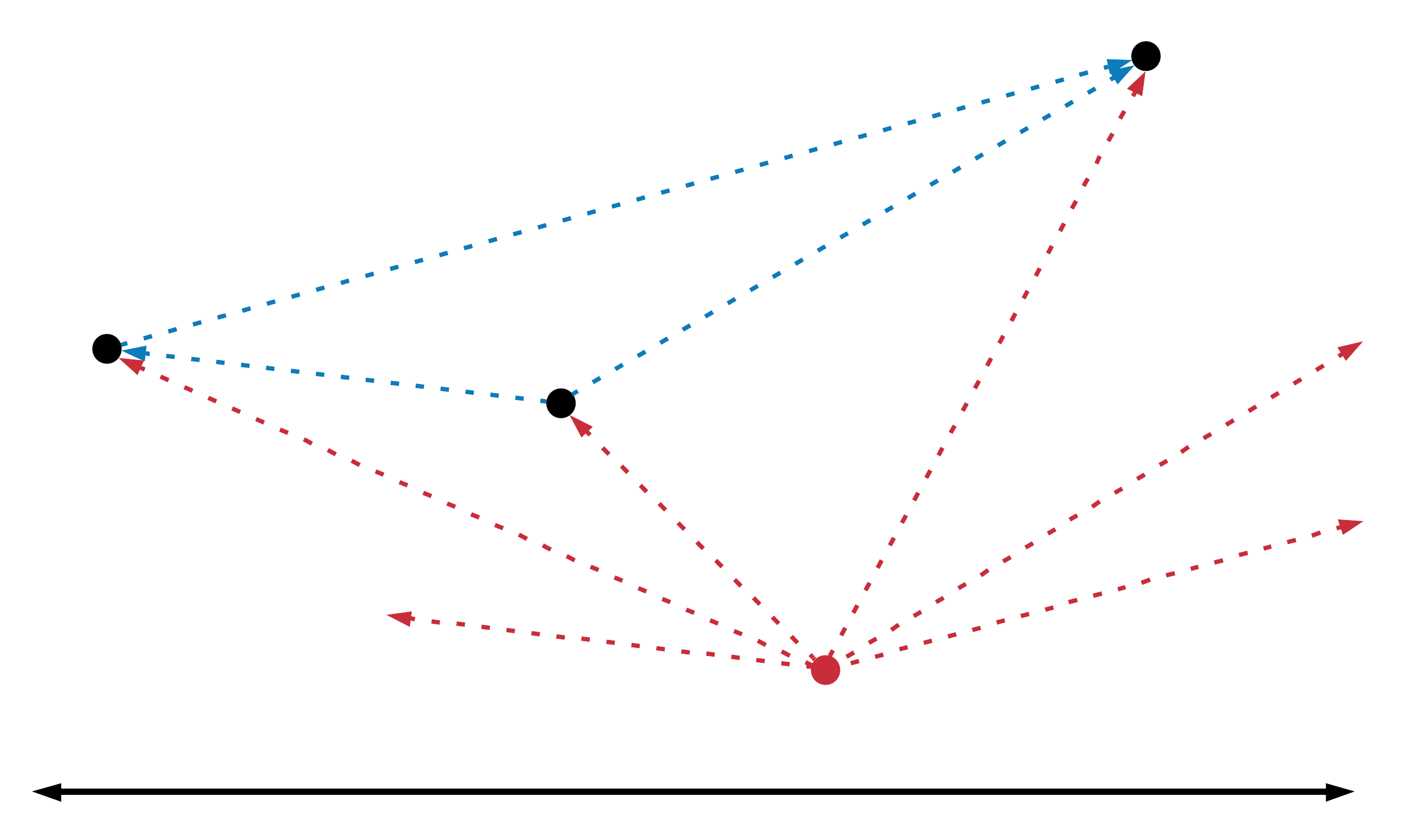}
          \caption{Draw rays from minimum}
      \end{subfigure}
    \begin{subfigure}{0.32\textwidth}
        \includegraphics[width=\textwidth]{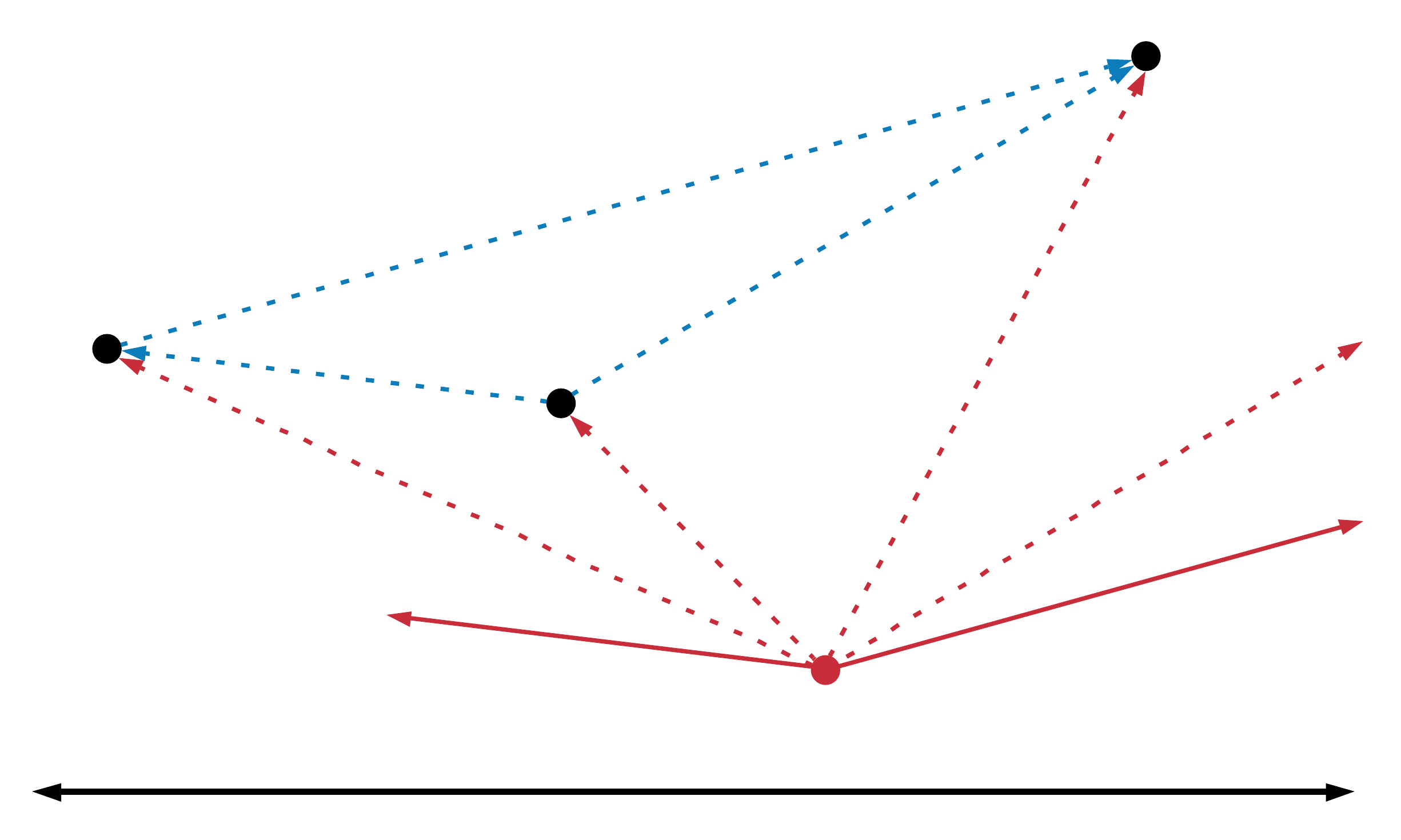}
          \caption{Select widest cone}
      \end{subfigure}
      \caption{A pictorial representation of the intuition behind our LCS for halfspaces. Given a (positive) monochromatic sample (diagram (a)), we find a minimal point (diagram (c)) and all directions of increase (diagram (b)). Combining these gives a set of positive rays which form cones (diagram (d)). Our LCS is given by the points which contribute to the widest cone, depicted in diagram (e) (here all 4 points).}
    \label{fig:halfspace}
\end{figure}
\begin{corollary}\label{cor:halfspace}
The class of halfspaces over $\mathbb{R}^2$ is actively RPU-learnable in only
\[
q(\varepsilon,\delta) \leq O\left(\log\left(\frac{1}{\varepsilon\delta}\right)\right)
\]
queries, $O(1)$ memory, and time $O\left( \frac{\log^2(1/(\delta\varepsilon))}{\varepsilon}\right)$.
\end{corollary}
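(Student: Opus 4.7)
The plan is to invoke the Bounded Memory Active Learning theorem (\Cref{cor:bounded}) directly, using the size-$5$ lossless compression scheme for 2D halfspaces with comparison queries established in the preceding proposition. Since $k=5$ is an absolute constant, every $k$-dependent quantity appearing in \Cref{cor:bounded} collapses to $O(1)$, and only the $\varepsilon,\delta$ dependencies remain. The inference rule $R$ here is the trivial one, so we need not worry about defining an intermediate rule.

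First I would verify the class-specific parameters that enter the complexity bounds of \Cref{cor:bounded}. For a subset of size $n=O(k)$, the label-plus-comparison query batch has size $n+\binom{n}{2}=O(1)$, yielding $b(6k)=O(1)$ and $t(6k)=O(1)$. For the compression time $T_C(7k)$, the construction implicit in the proof of the proposition (identify a minimal point in the sample under the inferred ordering, enumerate the positive rays emanating from it, and select the pair spanning the widest cone) operates on a sample of at most $35$ points and can be carried out by brute-force enumeration in $O(1)$ time. For the inference time $T_{I_R}(k)$, given the five stored points determining a cone $C=H_1\cap H_2$, testing membership of a new $y\in\mathbb{R}^2$ reduces to evaluating two linear inequalities and takes $O(1)$ time.

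For the memory bound, \Cref{cor:bounded} gives $M(X,H)\le O(B(7k))$, where $B(n)$ is the number of bits needed to encode query responses on $n$ points. With $n=35$ and at most $\binom{35}{2}+35=O(1)$ binary query answers (one per comparison and one per label), $B(35)=O(1)$; combined with the $O(k)=O(1)$ points on the query tape, the total memory is $O(1)$.

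Plugging $k=5$ and the above $O(1)$ parameters into the expressions of \Cref{cor:bounded} produces query complexity $O(b(6k)\log(1/(\varepsilon\delta)))=O(\log(1/(\varepsilon\delta)))$, memory $O(B(7k))=O(1)$, and running time dominated by $\frac{kT_{I_R}(k)\log(k/(\varepsilon\delta))}{\varepsilon}\cdot\log(1/(\varepsilon\delta))=O(\log^2(1/(\varepsilon\delta))/\varepsilon)$, matching the claim. The only potential obstacle is confirming that the geometric construction of the compression scheme can genuinely be executed on constant-sized samples in constant time and that the resulting cone representation supports $O(1)$ membership tests; both are immediate from the explicit construction in the preceding proof, so the corollary amounts to a direct plug-in.
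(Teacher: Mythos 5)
Your proposal is correct and follows essentially the same route as the paper, which simply invokes \Cref{cor:bounded} with the size-$5$ LCS and observes that all $k$-dependent quantities are constants in two dimensions. The one small divergence is cosmetic: the paper performs inference via a linear program (as in \cite{kane2017active}), whereas you implement the same trivial inference rule by the cone-membership test that the proposition's proof already shows to be exact; both are $O(1)$-time in fixed dimension, so the bounds are unaffected.
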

\begin{proof}
Inference is done through a linear program as in \cite{kane2017active}.
All computational parameters are $O(1)$ due to being in $O(1)$ dimensions, which gives the desired result.
\end{proof}
\section{Further Directions}\label{sec:further}
We end with a brief discussion of two natural directions suggested by our work.

\subsection{Characterizing Bounded Memory Active Learning}
In this work we prove that lossless sample compression is a sufficient condition for efficient, bounded memory active learning in the enriched query regime. KLMZ \cite{kane2017active} prove that inference dimension, a strictly weaker combinatorial parameter for enriched queries, is necessary for efficient bounded memory active learning. Closing the gap between these two conditions remains an open problem---it is currently unknown whether inference dimension is sufficient or lossless sample compression is necessary. Similarly, the relation between inference dimension and lossless sample compression themselves remains unknown. Over finite spaces it is clear from arguments of \cite{kane2017active} that inference dimension and lossless sample compression are equivalent up to a factor of $\log(|X|)$. However, since in the finite regime we are likely interested in learning all points in $|X|$ rather than to some accuracy parameter, the relevant memory bound depends crucially on $\log(|X|)$, making the tightness of the above relation an important question as well.

On a related note, though halfspaces in dimensions greater than two have infinite inference dimension, KLMZ \cite{kane2017active} do show that halfspaces with certain restricted structure (e.g. bounded bit complexity, margin) have finite inference dimension. Whether such classes have lossless compression schemes, or indeed are even learnable in bounded memory at all remains an interesting open problem and a concrete step towards understanding the above.
\subsection{Learning with Noise}
In this work we study only \emph{realizable} case learning. It is reasonable to wonder to what extent our results hold in the \emph{agnostic} model, where the adversary may choose any function (rather than being restricted to one from the concept class $H$), or various models of noise in which the adversary may corrupt queries. While inference-based learning is difficult in such regimes, previous work has seen some success with particular classes of enriched queries such as comparisons \cite{xu2017noise, hopkins2020noise}. Proving the existence of \emph{bounded memory} active learners even for simple noise and query regimes such as random classification noise with comparisons remains an important problem for bringing bounded memory active learning closer to practice.

\bibliographystyle{plainnat}
\bibliography{references.bib} 

\begin{thebibliography}{61}
\providecommand{\natexlab}[1]{#1}
\providecommand{\url}[1]{\texttt{#1}}
\expandafter\ifx\csname urlstyle\endcsname\relax
  \providecommand{\doi}[1]{doi: #1}\else
  \providecommand{\doi}{doi: \begingroup \urlstyle{rm}\Url}\fi

\bibitem[Ailon et~al.(2018)Ailon, Bhattacharya, and
  Jaiswal]{ailon2018approximate}
Nir Ailon, Anup Bhattacharya, and Ragesh Jaiswal.
\newblock Approximate correlation clustering using same-cluster queries.
\newblock In \emph{Latin American Symposium on Theoretical Informatics}, pages
  14--27. Springer, 2018.

\bibitem[Ameur(1995)]{ameur1995space}
Foued Ameur.
\newblock A space-bounded learning algorithm for axis-parallel rectangles.
\newblock In \emph{European Conference on Computational Learning Theory}, pages
  313--321. Springer, 1995.

\bibitem[Ameur et~al.(1996)Ameur, Fischer, Höffgen, and Meyer auf~der
  Heide]{ameur1993trial}
Foued Ameur, Paul Fischer, Klaus-Uwe Höffgen, and Friedhelm Meyer auf~der
  Heide.
\newblock Trial and error: A new approach to space-bounded learning.
\newblock \emph{Acta Inf.}, 33:\penalty0 621--630, 10 1996.
\newblock \doi{10.1007/s002360050062}.

\bibitem[Ashtiani et~al.(2016)Ashtiani, Kushagra, and
  Ben-David]{ashtiani2016clustering}
Hassan Ashtiani, Shrinu Kushagra, and Shai Ben-David.
\newblock Clustering with same-cluster queries.
\newblock \emph{arXiv preprint arXiv:1606.02404}, 2016.

\bibitem[Assadi and Raz(2020)]{assadi2020near}
Sepehr Assadi and Ran Raz.
\newblock Near-quadratic lower bounds for two-pass graph streaming algorithms.
\newblock \emph{arXiv preprint arXiv:2009.01161}, 2020.

\bibitem[Balcan and Hanneke(2012)]{balcan2012robust}
Maria~Florina Balcan and Steve Hanneke.
\newblock Robust interactive learning.
\newblock In \emph{Conference on Learning Theory}, pages 20--1, 2012.

\bibitem[Beame et~al.(2018)Beame, Gharan, and Yang]{beame2018time}
Paul Beame, Shayan~Oveis Gharan, and Xin Yang.
\newblock Time-space tradeoffs for learning finite functions from random
  evaluations, with applications to polynomials.
\newblock In \emph{Conference On Learning Theory}, pages 843--856, 2018.

\bibitem[Blanc et~al.(2020)Blanc, Gupta, Lange, and Tan]{blanc2020universal}
Guy Blanc, Neha Gupta, Jane Lange, and Li-Yang Tan.
\newblock Universal guarantees for decision tree induction via a higher-order
  splitting criterion.
\newblock \emph{arXiv preprint arXiv:2010.08633}, 2020.

\bibitem[Cui and Sato(2020)]{Cui2020uncertainty}
Zhenghang Cui and Issei Sato.
\newblock Active classification with uncertainty comparison queries.
\newblock \emph{arXiv preprint arXiv:2008.00645}, 2020.

\bibitem[Dagan et~al.(2019)Dagan, Kur, and Shamir]{dagan2019space}
Yuval Dagan, Gil Kur, and Ohad Shamir.
\newblock Space lower bounds for linear prediction in the streaming model.
\newblock \emph{arXiv preprint arXiv:1902.03498}, 2019.

\bibitem[Dasgupta(2005)]{dasgupta2005analysis}
Sanjoy Dasgupta.
\newblock Analysis of a greedy active learning strategy.
\newblock In \emph{Advances in neural information processing systems}, pages
  337--344, 2005.

\bibitem[Dasgupta(2011)]{dasgupta2011two}
Sanjoy Dasgupta.
\newblock Two faces of active learning.
\newblock \emph{Theoretical computer science}, 412\penalty0 (19):\penalty0
  1767--1781, 2011.

\bibitem[Dasgupta et~al.(2018)Dasgupta, Dey, Roberts, and
  Sabato]{dasgupta2018learning}
Sanjoy Dasgupta, Akansha Dey, Nicholas Roberts, and Sivan Sabato.
\newblock Learning from discriminative feature feedback.
\newblock \emph{Advances in Neural Information Processing Systems},
  31:\penalty0 3955--3963, 2018.

\bibitem[Ehrenfeucht and Haussler(1989)]{ehrenfeucht1989learning}
Andrzej Ehrenfeucht and David Haussler.
\newblock Learning decision trees from random examples.
\newblock \emph{Information and Computation}, 82\penalty0 (3):\penalty0
  231--246, 1989.

\bibitem[El-Yaniv and Wiener(2012)]{el2012active}
Ran El-Yaniv and Yair Wiener.
\newblock Active learning via perfect selective classification.
\newblock \emph{Journal of Machine Learning Research}, 13\penalty0
  (Feb):\penalty0 255--279, 2012.

\bibitem[Firmani et~al.(2018)Firmani, Galhotra, Saha, and
  Srivastava]{firmani2018robust}
Donatella Firmani, Sainyam Galhotra, Barna Saha, and Divesh Srivastava.
\newblock Robust entity resolution using a crowdoracle.
\newblock \emph{IEEE Data Eng. Bull.}, 41\penalty0 (2):\penalty0 91--103, 2018.

\bibitem[Floyd(1989)]{floyd1989space}
Sally Floyd.
\newblock Space-bounded learning and the vapnik-chervonenkis dimension.
\newblock In \emph{Proceedings of the second annual workshop on Computational
  learning theory}, pages 349--364, 1989.

\bibitem[Floyd and Warmuth(1995)]{floyd1995sample}
Sally Floyd and Manfred Warmuth.
\newblock Sample compression, learnability, and the vapnik-chervonenkis
  dimension.
\newblock \emph{Machine learning}, 21\penalty0 (3):\penalty0 269--304, 1995.

\bibitem[Garg et~al.(2018)Garg, Raz, and Tal]{garg2018extractor}
Sumegha Garg, Ran Raz, and Avishay Tal.
\newblock Extractor-based time-space lower bounds for learning.
\newblock In \emph{Proceedings of the 50th Annual ACM SIGACT Symposium on
  Theory of Computing}, pages 990--1002, 2018.

\bibitem[Gonen et~al.(2020)Gonen, Lovett, and Moshkovitz]{gonen2020towards}
Alon Gonen, Shachar Lovett, and Michal Moshkovitz.
\newblock Towards a combinatorial characterization of bounded memory learning.
\newblock \emph{arXiv preprint arXiv:2002.03123}, 2020.

\bibitem[Hancock et~al.(1996)Hancock, Jiang, Li, and Tromp]{hancock1996lower}
Thomas Hancock, Tao Jiang, Ming Li, and John Tromp.
\newblock Lower bounds on learning decision lists and trees.
\newblock \emph{Information and Computation}, 126\penalty0 (2):\penalty0
  114--122, 1996.

\bibitem[Hanneke(2016)]{hanneke2016optimal}
Steve Hanneke.
\newblock The optimal sample complexity of pac learning.
\newblock \emph{The Journal of Machine Learning Research}, 17\penalty0
  (1):\penalty0 1319--1333, 2016.

\bibitem[Hanneke et~al.(2014)]{hanneke2014theory}
Steve Hanneke et~al.
\newblock Theory of disagreement-based active learning.
\newblock \emph{Foundations and Trends{\textregistered} in Machine Learning},
  7\penalty0 (2-3):\penalty0 131--309, 2014.

\bibitem[Har-Peled et~al.(2020)Har-Peled, Jones, and
  Rahul]{HarPeled2020ActiveLA}
Sariel Har-Peled, Mitchell Jones, and S.~Rahul.
\newblock Active learning a convex body in low dimensions.
\newblock In \emph{ICALP}, 2020.

\bibitem[Haussler(1988)]{h-sela-88}
D.~Haussler.
\newblock Space efficient learning algorithms.
\newblock Technical Report UCSC-CRL-88-2, University of Calif. Computer
  Research Laboratory, Santa Cruz, CA, 1988.

\bibitem[Hopkins et~al.(2020{\natexlab{a}})Hopkins, Kane, and
  Lovett]{hopkins2020power}
Max Hopkins, Daniel Kane, and Shachar Lovett.
\newblock The power of comparisons for actively learning linear classifiers.
\newblock \emph{Advances in Neural Information Processing Systems}, 33,
  2020{\natexlab{a}}.

\bibitem[Hopkins et~al.(2020{\natexlab{b}})Hopkins, Kane, Lovett, and
  Mahajan]{hopkins2020noise}
Max Hopkins, Daniel Kane, Shachar Lovett, and Gaurav Mahajan.
\newblock Noise-tolerant, reliable active classification with comparison
  queries.
\newblock In \emph{Conference on Learning Theory}, pages 1957--2006. PMLR,
  2020{\natexlab{b}}.

\bibitem[Hssina et~al.(2014)Hssina, Merbouha, Ezzikouri, and
  Erritali]{hssina2014comparative}
Badr Hssina, Abdelkarim Merbouha, Hanane Ezzikouri, and Mohammed Erritali.
\newblock A comparative study of decision tree id3 and c4. 5.
\newblock \emph{International Journal of Advanced Computer Science and
  Applications}, 4\penalty0 (2):\penalty0 13--19, 2014.

\bibitem[Jamieson and Nowak(2011)]{jamieson2011active}
Kevin~G Jamieson and Robert Nowak.
\newblock Active ranking using pairwise comparisons.
\newblock In \emph{Advances in neural information processing systems}, pages
  2240--2248, 2011.

\bibitem[Kalai and Teng(2008)]{kalai2008decision}
Adam~Tauman Kalai and Shang-Hua Teng.
\newblock Decision trees are pac-learnable from most product distributions: a
  smoothed analysis.
\newblock \emph{arXiv preprint arXiv:0812.0933}, 2008.

\bibitem[Kane et~al.(2018)Kane, Lovett, and Moran]{kane2018generalized}
Daniel Kane, Shachar Lovett, and Shay Moran.
\newblock Generalized comparison trees for point-location problems.
\newblock In \emph{International Colloquium on Automata, Languages and
  Programming}, 2018.

\bibitem[Kane et~al.(2017)Kane, Lovett, Moran, and Zhang]{kane2017active}
Daniel~M Kane, Shachar Lovett, Shay Moran, and Jiapeng Zhang.
\newblock Active classification with comparison queries.
\newblock In \emph{2017 IEEE 58th Annual Symposium on Foundations of Computer
  Science (FOCS)}, pages 355--366. IEEE, 2017.

\bibitem[Karbasi et~al.(2012)Karbasi, Ioannidis, et~al.]{karbasi2012comparison}
Amin Karbasi, Stratis Ioannidis, et~al.
\newblock Comparison-based learning with rank nets.
\newblock \emph{arXiv preprint arXiv:1206.4674}, 2012.

\bibitem[Kivinen(1990)]{Kivinen2}
J.~Kivinen.
\newblock Reliable and useful learning with uniform probability distributions.
\newblock In \emph{Proceedings of the First International Workshop on
  Algorithmic Learning Theory}, 1990.

\bibitem[Kivinen(1995)]{Kivinen}
J.~Kivinen.
\newblock Learning reliably and with one-sided error.
\newblock \emph{Mathematical Systems Theory}, 28\penalty0 (2):\penalty0
  141--172, 1995.

\bibitem[Kulkarni et~al.(1993)Kulkarni, Mitter, and
  Tsitsiklis]{kulkarni1993active}
Sanjeev~R Kulkarni, Sanjoy~K Mitter, and John~N Tsitsiklis.
\newblock Active learning using arbitrary binary valued queries.
\newblock \emph{Machine Learning}, 11\penalty0 (1):\penalty0 23--35, 1993.

\bibitem[Kushilevitz and Mansour(1993)]{kushilevitz1993learning}
Eyal Kushilevitz and Yishay Mansour.
\newblock Learning decision trees using the fourier spectrum.
\newblock \emph{SIAM Journal on Computing}, 22\penalty0 (6):\penalty0
  1331--1348, 1993.

\bibitem[Littlestone and Warmuth(1986)]{littlestone1986relating}
Nick Littlestone and Manfred Warmuth.
\newblock Relating data compression and learnability.
\newblock 1986.

\bibitem[Mazumdar and Saha(2017)]{mazumdar2017clustering}
Arya Mazumdar and Barna Saha.
\newblock Clustering with noisy queries.
\newblock In \emph{Advances in Neural Information Processing Systems}, pages
  5788--5799, 2017.

\bibitem[Moshkovitz and Moshkovitz(2017)]{moshkovitz2017mixing}
Dana Moshkovitz and Michal Moshkovitz.
\newblock Mixing implies lower bounds for space bounded learning.
\newblock In \emph{Conference on Learning Theory}, pages 1516--1566, 2017.

\bibitem[Moshkovitz and Moshkovitz(2018)]{moshkovitz2018entropy}
Dana Moshkovitz and Michal Moshkovitz.
\newblock Entropy samplers and strong generic lower bounds for space bounded
  learning.
\newblock In \emph{9th Innovations in Theoretical Computer Science Conference
  (ITCS 2018)}. Schloss Dagstuhl-Leibniz-Zentrum fuer Informatik, 2018.

\bibitem[Moshkovitz and Tishby(2017)]{moshkovitz2017general}
Michal Moshkovitz and Naftali Tishby.
\newblock A general memory-bounded learning algorithm.
\newblock \emph{arXiv preprint arXiv:1712.03524}, 2017.

\bibitem[Nelson and Yu(2020)]{nelson2020optimal}
Jelani Nelson and Huacheng Yu.
\newblock Optimal bounds for approximate counting.
\newblock \emph{arXiv preprint arXiv:2010.02116}, 2020.

\bibitem[O'Donnell and Servedio(2007)]{o2007learning}
Ryan O'Donnell and Rocco~A Servedio.
\newblock Learning monotone decision trees in polynomial time.
\newblock \emph{SIAM Journal on Computing}, 37\penalty0 (3):\penalty0 827--844,
  2007.

\bibitem[Quinlan(1986)]{quinlan1986induction}
J.~Ross Quinlan.
\newblock Induction of decision trees.
\newblock \emph{Machine learning}, 1\penalty0 (1):\penalty0 81--106, 1986.

\bibitem[Raz(2017)]{raz2017time}
Ran Raz.
\newblock A time-space lower bound for a large class of learning problems.
\newblock In \emph{2017 IEEE 58th Annual Symposium on Foundations of Computer
  Science (FOCS)}, pages 732--742. IEEE, 2017.

\bibitem[Raz(2018)]{raz2018fast}
Ran Raz.
\newblock Fast learning requires good memory: A time-space lower bound for
  parity learning.
\newblock \emph{Journal of the ACM (JACM)}, 66\penalty0 (1):\penalty0 1--18,
  2018.

\bibitem[Rivest and Sloan(1988)]{rivest1988learning}
Ronald~L Rivest and Robert~H Sloan.
\newblock Learning complicated concepts reliably and usefully.
\newblock In \emph{AAAI}, pages 635--640, 1988.

\bibitem[Saha and Subramanian(2019)]{saha2019correlation}
Barna Saha and Sanjay Subramanian.
\newblock Correlation clustering with same-cluster queries bounded by optimal
  cost.
\newblock \emph{arXiv preprint arXiv:1908.04976}, 2019.

\bibitem[Settles(2009)]{settles2009active}
Burr Settles.
\newblock Active learning literature survey.
\newblock Technical report, University of Wisconsin-Madison Department of
  Computer Sciences, 2009.

\bibitem[Shamir(2014)]{shamir2014fundamental}
Ohad Shamir.
\newblock Fundamental limits of online and distributed algorithms for
  statistical learning and estimation.
\newblock \emph{Advances in Neural Information Processing Systems},
  27:\penalty0 163--171, 2014.

\bibitem[Sharan et~al.(2019)Sharan, Sidford, and Valiant]{sharan2019memory}
Vatsal Sharan, Aaron Sidford, and Gregory Valiant.
\newblock Memory-sample tradeoffs for linear regression with small error.
\newblock In \emph{Proceedings of the 51st Annual ACM SIGACT Symposium on
  Theory of Computing}, pages 890--901, 2019.

\bibitem[Singh and Gupta(2014)]{singh2014comparative}
Sonia Singh and Priyanka Gupta.
\newblock Comparative study id3, cart and c4. 5 decision tree algorithm: a
  survey.
\newblock \emph{International Journal of Advanced Information Science and
  Technology (IJAIST)}, 27\penalty0 (27):\penalty0 97--103, 2014.

\bibitem[Steinhardt et~al.(2016)Steinhardt, Valiant, and
  Wager]{steinhardt2016memory}
Jacob Steinhardt, Gregory Valiant, and Stefan Wager.
\newblock Memory, communication, and statistical queries.
\newblock In \emph{Conference on Learning Theory}, pages 1490--1516, 2016.

\bibitem[Umanol et~al.(1994)Umanol, Okamoto, Hatono, Tamura, Kawachi, Umedzu,
  and Kinoshita]{umanol1994fuzzy}
M~Umanol, Hirotaka Okamoto, Itsuo Hatono, Hiroyuki Tamura, Fumio Kawachi,
  Sukehisa Umedzu, and Junichi Kinoshita.
\newblock Fuzzy decision trees by fuzzy id3 algorithm and its application to
  diagnosis systems.
\newblock In \emph{Proceedings of 1994 IEEE 3rd International Fuzzy Systems
  Conference}, pages 2113--2118. IEEE, 1994.

\bibitem[Valiant(1984)]{valiant1984theory}
Leslie~G Valiant.
\newblock A theory of the learnable.
\newblock In \emph{Proceedings of the sixteenth annual ACM symposium on Theory
  of computing}, pages 436--445. ACM, 1984.

\bibitem[Vapnik and Chervonenkis(1974)]{vapnik1974theory}
Vladimir Vapnik and Alexey Chervonenkis.
\newblock Theory of pattern recognition, 1974.

\bibitem[Verroios et~al.(2017)Verroios, Garcia-Molina, and
  Papakonstantinou]{verroios2017waldo}
Vasilis Verroios, Hector Garcia-Molina, and Yannis Papakonstantinou.
\newblock Waldo: An adaptive human interface for crowd entity resolution.
\newblock In \emph{Proceedings of the 2017 ACM International Conference on
  Management of Data}, pages 1133--1148, 2017.

\bibitem[Vikram and Dasgupta(2016)]{vikram2016interactive}
Sharad Vikram and Sanjoy Dasgupta.
\newblock Interactive bayesian hierarchical clustering.
\newblock In \emph{International Conference on Machine Learning}, pages
  2081--2090, 2016.

\bibitem[Wauthier et~al.(2012)Wauthier, Jojic, and Jordan]{wauthier2012active}
Fabian~L Wauthier, Nebojsa Jojic, and Michael~I Jordan.
\newblock Active spectral clustering via iterative uncertainty reduction.
\newblock In \emph{Proceedings of the 18th ACM SIGKDD international conference
  on Knowledge discovery and data mining}, pages 1339--1347, 2012.

\bibitem[Xu et~al.(2017)Xu, Zhang, Miller, Singh, and Dubrawski]{xu2017noise}
Yichong Xu, Hongyang Zhang, Kyle Miller, Aarti Singh, and Artur Dubrawski.
\newblock Noise-tolerant interactive learning using pairwise comparisons.
\newblock In \emph{Advances in Neural Information Processing Systems}, pages
  2431--2440, 2017.

\end{thebibliography}
\end{document}